\documentclass{article}
\PassOptionsToPackage{dvipsnames}{xcolor}



\usepackage[preprint]{neurips_2025}



\usepackage[utf8]{inputenc} 
\usepackage[T1]{fontenc}    
\usepackage{microtype}
\usepackage{setspace}
\usepackage[colorlinks,linkcolor=blue,anchorcolor=black,citecolor=blue]{hyperref}
\usepackage{url}            
\usepackage{booktabs}       
\usepackage{amsfonts}       
\usepackage{nicefrac}       
\usepackage{algorithm}

\usepackage{pgfplots}
\pgfplotsset{compat=1.8}

\usepackage{multirow}
\usepackage{graphicx}
\usepackage{wrapfig}
\usepackage{epsfig}
\usepackage{adjustbox}
\usepackage{amsmath}
\usepackage{amssymb}
\usepackage{marvosym}
\usepackage{color}
\usepackage{threeparttable}
\usepackage{algorithm}
\usepackage{algpseudocode}
\usepackage{bbm}
\usepackage{caption}
\usepackage{subfig}
\usepackage{colortbl}
\usepackage{listings}
\usepackage{amsmath,amsthm}
\usepackage{pifont}

\usepackage{multirow} 
\usepackage{colortbl}  
\usepackage{array}   
\usepackage{bm}       

\usepackage{graphicx} 
\usepackage{multirow} 
\usepackage{booktabs} 
\newcommand{\ccol}{\hspace{0.1em}}

\usepackage{amsthm}
\usepackage{amsmath}

\newtheorem{theorem}{Theorem}
\newtheorem{lemma}{Lemma}
\newtheorem{proof}{Proof}[section]
\usepackage{soul} 

\definecolor{codegreen}{rgb}{0,0.6,0}
\definecolor{codegray}{rgb}{0.5,0.5,0.5}
\definecolor{codepurple}{rgb}{0.58,0,0.82}
\definecolor{backcolour}{rgb}{1.0,1.0,1.0}

\lstdefinestyle{mystyle}{
    backgroundcolor=\color{backcolour},   
    commentstyle=\color{codegreen},
    keywordstyle=\color{magenta},
    numberstyle=\tiny\color{codegray},
    stringstyle=\color{codepurple},
    basicstyle=\ttfamily\footnotesize,
    breakatwhitespace=false,         
    breaklines=true,                 
    captionpos=b,                    
    keepspaces=true,                 
    numbers=left,                    
    numbersep=5pt,                  
    showspaces=false,                
    showstringspaces=false,
    showtabs=false,                  
    tabsize=2,
    frame=single 
}

\lstset{style=mystyle}

\colorlet{my-red}{BrickRed!90!Sepia}
\colorlet{my-blue}{Aquamarine!30!Blue}
\hypersetup{
  colorlinks,
  linkcolor = my-red,
  citecolor = my-blue,
}

\title{Generalized Category Discovery via\\Token Manifold Capacity Learning}

%


\author{Luyao Tang$^{1}$, \quad Kunze Huang$^{1}$, \quad Chaoqi Chen$^{2}$, \quad Cheng Chen$^{3}$\\
{$^1$ Xiamen University}\quad
{$^2$ Shenzhen University}\quad
{$^3$ The University of Hong Kong}\\
{\tt\small  \{lytang, kzhuang\}@stu.xmu.edu.cn, cqchen1994@gmail.com, cchen@eee.hku.hk}
}

\begin{document}

\maketitle

\begin{abstract}

Generalized category discovery (GCD) is essential for improving deep learning models’ robustness in open-world scenarios by clustering unlabeled data containing both known and novel categories. Traditional GCD methods focus on minimizing intra-cluster variations, often sacrificing manifold capacity, which limits the richness of intra-class representations. In this paper, we propose a novel approach, Maximum Token Manifold Capacity (MTMC), that prioritizes maximizing the manifold capacity of class tokens to preserve the diversity and complexity of data. MTMC leverages the nuclear norm of singular values as a measure of manifold capacity, ensuring that the representation of samples remains informative and well-structured. This method enhances the discriminability of clusters, allowing the model to capture detailed semantic features and avoid the loss of critical information during clustering. Through theoretical analysis and extensive experiments on coarse- and fine-grained datasets, we demonstrate that MTMC outperforms existing GCD methods, improving both clustering accuracy and the estimation of category numbers. The integration of MTMC leads to more complete representations, better inter-class separability, and a reduction in dimensional collapse, establishing MTMC as a vital component for robust open-world learning. \href{https://github.com/lytang63/MTMC}{Code is here}.

\end{abstract}

\section{Introduction}
\label{sec:intro}

\begin{wrapfigure}{r}{7.5cm}
 \vspace{-0.4cm}
    \includegraphics[width=7.5cm, height=4cm]{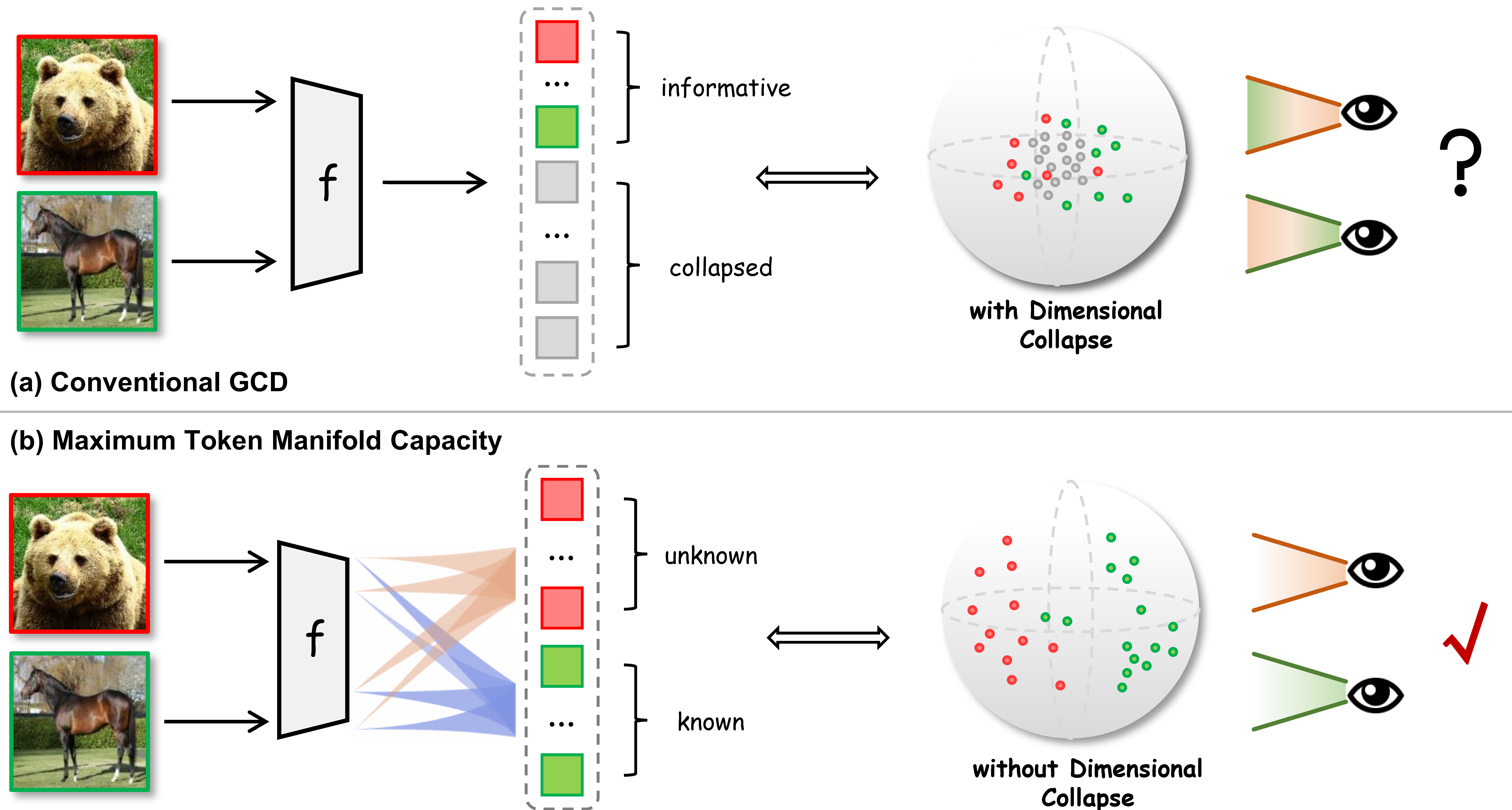}
    \caption{(a) GCD is constrained by dimensional collapse due to strong clustering, leading to mixed class features and limited representational capacity. (b) MTMC enhances the class token manifold capacity, improving representational completeness and unlocking the model's full potential in the open world.}
    \label{fig:clarify}
    \vspace{-0.4cm}
\end{wrapfigure}
Machine learning models encounter substantial challenges when deployed in real-world settings due to the intractability of objects in the open world~\cite{zhou2022domain, sarker2021machine, weiss2016survey}. The diversity of real-world objects exceeds the scope of data collected for training~\cite{wu2024towards}, and labeled data covers even fewer categories. Traditional deep learning models, trained on predefined categories, are ill-equipped to handle new category samples. To enhance the reliability of model deployment in real-world scenarios, open-world learning has emerged, aiming to identify and categorize unknown samples~\cite{han2019learning, geng2020recent, vaze2022generalized} in new environments.


A plethora of approaches have been proposed to identify and categorize unknown samples, 
such as open-set recognition (OSR)~\cite{geng2020recent} and novel class discovery (NCD)~\cite{han2019learning}. 
However, OSR treats all unknown samples as a single category. 
On the other hand, NCD relies on a strong assumption that all unlabeled samples encountered come from new classes. 
To relax this assumption, Generalized Category Discovery (GCD)~\cite{vaze2022generalized} permits the presence of known classes within unlabeled data. GCD relies on contrastive learning~\cite{choi2024contrastive} or prototype learning~\cite{wen2023parametric} to reduce the distance between semantically identical samples in the embedding space. 
However, current approaches face significant challenges: \textit{(i)} the compressed inter-class distribution may lead to the loss of useful information. This results in each cluster being unable to fully represent the semantic details within a class, leading to bias within the feature space, which is detrimental to category discovery. \textit{(ii)} Bias prevents the inter-class decision boundaries from aligning with the boundaries between real-world categories, making it impossible for the model to accurately separate clusters during the discovery of categories (Figure~\ref{fig:eigenvalue_acc} demonstrates that incomplete intra-class representations result in low clustering accuracy).

To this premise, we challenge the status quo by raising an open question: \textit{Can deep models accurately separate new semantics during the category discovery by enhancing the \textbf{completeness of intra-class representations}?} The GCD aims to partition data points into distinct clusters, which are distributed on low-dimensional manifolds~\cite{souvenir2005manifold, wah2011caltech} within high-dimensional spaces. Recently, Maximum Manifold Capacity Representations~\cite{yerxa2023learning, schaeffer2024towards, isik2023information} have sought to learn representations by examining the separability of manifolds. In this context, manifolds containing views of the same scene are both compact and low-dimensional, while manifolds corresponding to different scenes are separated.

Building on manifold capacity concept, we introduce Maximum Class Token Manifold Capacity (\textbf{MTMC}). Specifically, (1) we associate low intra-class representation completeness with low manifold capacity. Our research narrows the focus from the entire feature space to the intra-class feature space, examining manifold capacity at a more \textbf{granular token level}. (2) We consider the representation of a sample as its manifold, with the sample representation in GCD derived from the class token provided by Vision Transformer (ViT)~\cite{dosovitskiy2020image}. 
Under the attention mechanism, the class token refines the patch features, thus serving as a proxy for the \textbf{sample manifold}. (3) Given that a comprehensive and information-rich class token manifold necessitates a large capacity, we measure manifold capacity using the \textbf{nuclear norm of class token} and aim to maximize this norm. (4) MTMC enhances the completeness of sample representation, enabling clusters to capture more \textbf{intra-class semantic details} while preventing dimensionality collapse, thus improving inter-class separability accuracy. Our contributions are summarized as follows:

\begin{itemize}
\item We introduce MTMC to enhance representation completeness and analyze its effectiveness in addressing dimensional collapse and improving von Neumann entropy.
\item We maximize the nuclear norm of the class token's singular value kernel to increase its manifold capacity, enabling clusters to capture more intra-class semantic details.
\item MTMC is easy to implement. Experiments on coarse- and fine-grained datasets demonstrate its effectiveness in improving precision and the accuracy of category number estimation.

\end{itemize}

\section{Preliminary and Motivation}
\label{sec:pre}

\subsection{Notation and Optimization of GCD}


For each dataset, consider a labeled subset $ \mathcal{D}_l = \{ (\mathbf{x}_i^l, y_i^l) \} \subset \mathcal{X} \times \mathcal{Y}_l $ and an unlabeled subset $ \mathcal{D}_u = \{ (\mathbf{x}_i^u, y_i^u) \} \subset \mathcal{X} \times \mathcal{Y}_u $. Only known classes can be found in $ \mathcal{D}_l $, while $ \mathcal{D}_u $ encompasses known and novel classes, translating to $ \mathcal{Y}_l = \mathcal{C}_{known} $ and $ \mathcal{Y}_u = \mathcal{C}_{known} \cup \mathcal{C}_{novel} $. The task of models involves clustering on both the known and novel classes in $ \mathcal{D}_u $. The number of novel classes represented as $ K_{novel} $ can be determined beforehand~\cite{vaze2022generalized, pu2023dynamic, zhao2023learning}. The functions $ f(\cdot) $ and $ g(\cdot) $ perform as the feature extractor and projection head, respectively. Both the feature $\mathbf{h}_i = f(\mathbf{x}_i)$ and the projected embedding $ \mathbf{z}_i = g(\mathbf{h}_i) $ are under L-2 normalization.

For compact clustering, GCD consists of supervised and unsupervised contrastive learning or prototype learning (discussed in Appendix \ref{subsec:rela_gcd}). While these methods achieve clustering, they overly prioritize compact clusters, overlooking incomplete intra-class representations that fail to capture the full distribution, resulting in low manifold capacity. More details are provided in Appendix \ref{sec:gcd}.

\subsection{Manifold Capacity Theory}
Manifold capacity theory~\cite{yerxa2023learning,  isik2023information} evaluates the efficiency of neural representation coding by mapping high-dimensional data to low-dimensional manifolds representing different objects or categories. Key concepts include: \textit{(1) Manifold Radius}: $R\_M = \sqrt{\frac{1}{P} \sum\_{i=1}^P \lambda\_i^2}$, where $\lambda\_i$ are the eigenvalues of the covariance matrix of points on the manifold, and $P$ is the number of points. It measures the manifold's size relative to its centroid.
\textit{(2) Manifold Dimensionality}: $D\_M = \frac{\left(\sum\_{i=1}^P \lambda\_i\right)^2}{\sum\_{i=1}^P \lambda\_i^2}$, quantifying the manifold's expansion along its major directions.
\textit{(3) Manifold Capacity}: $\alpha\_C = \phi\left(R\_M \sqrt{D\_M}\right)$, where $\phi(\cdot)$ is a monotonically decreasing function. This represents the maximum number of linearly separable manifolds in a feature space. Manifold capacity, derived from radius and dimensionality, determines the number of distinguishable categories in high-dimensional space. Optimizing it enhances coding efficiency by refining the manifold's geometric properties.


\subsection{Our Thoughts on Why GCD Needs a Manifold Capacity Quest}

Based on above, GCD has made notable progress in clustering both known and unknown categories. However, current methods often fall short due to their focus on compact clustering, which compromises the richness of intra-class representations. We identify two key issues that motivate the need for manifold capacity quest in GCD:

\textbf{Incomplete Intra-class Representations}: Existing methods overlook the need for a comprehensive representation within each class, resulting in poor feature embeddings that fail to capture the full diversity of category-specific structures.

\textbf{Dimensional Collapse}: Compact clustering methods lead to dimensional collapse, where embeddings become overly simplified and fail to preserve the data's intrinsic complexity. This limits the ability of GCD models to accurately separate categories.

Through analysis (detailed proofs in Appendix \ref{sec:theoremMTMC}), we demonstrate that MTMC provides a theoretical framework that can substantially improve the accuracy and robustness of GCD, making it an essential addition for real-world category discovery.

\section{Methodology}
\label{sec:method}



\begin{figure}[t]
\begin{center}
  \includegraphics[width=0.85\linewidth]{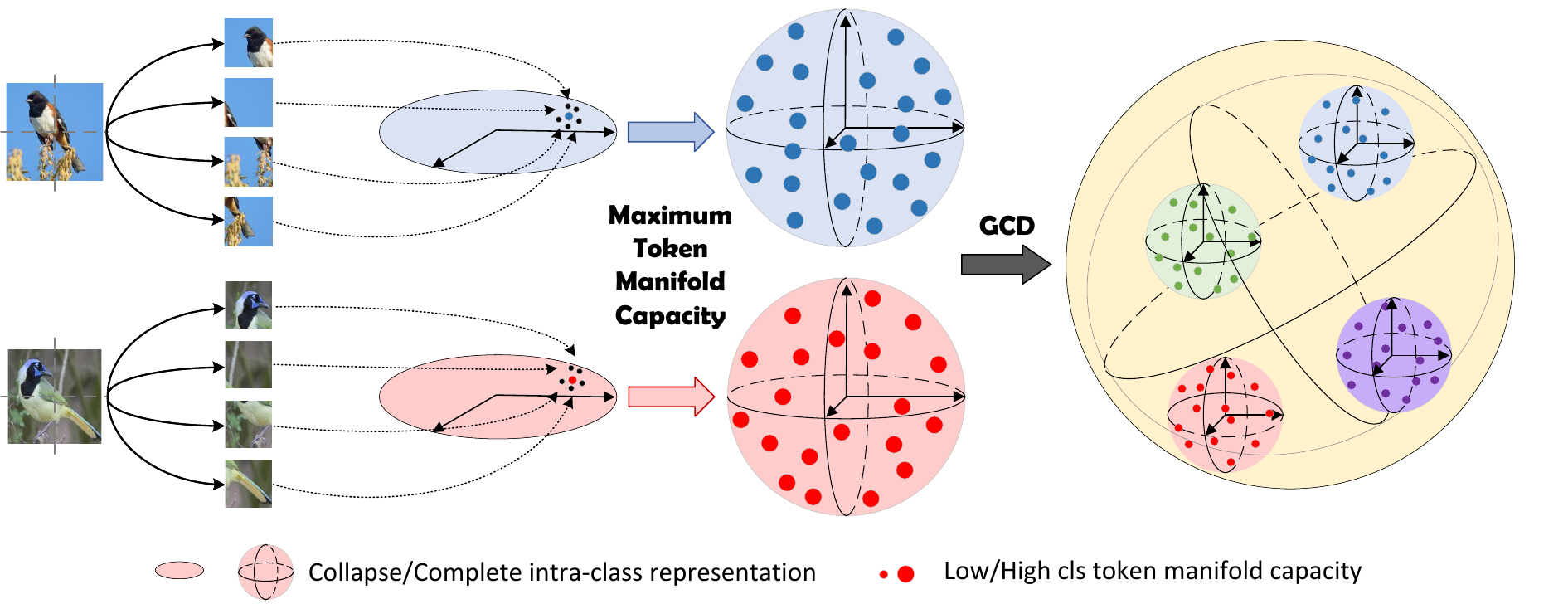}
\end{center}
\caption{Overview of Maximum Token Manifold Capacity.}
\label{fig:overview}
\end{figure}

As shown in Figure~\ref{fig:overview}, Maximum Token Manifold Capacity is pithy. For simplicity, we use \texttt{[cls]} to represent the class token and \texttt{[vis]} to represent visual/patch tokens. In Section~\ref{subsec:extent}, we trace the formation process of \texttt{[cls]} and \texttt{[vis]}, and identify \texttt{[cls]} as the sample centroid, also providing the definition of class token manifold extent, which is strongly correlated with capacity. In Section~\ref{subsec:mtmc}, we introduce the optimization objective of maximum class token manifold capacity and offer a concise code illustration.

\subsection{Extent of Class Token Manifold}
\label{subsec:extent}

We introduce the concept of ``sample centroid" without imposing restrictions on backbone, whether they are CNNs or Transformers. In the GCD task, the backbone network is ViT, and the \texttt{[cls]} is treated as the "sample centroid" refined from \texttt{[vis]}. Mathematically, the refined sample centroid can be described as the weighted average of all visual tokens using a self-attention mechanism. Here, the sample centroid refers to the weighted aggregation of features from all visual tokens by the class token through a self-attention mechanism to form the global representation of the image. The concepts of \textbf{sample centroid manifold} and \textbf{class token manifold} are equivalent in nature.

Specifically, in the self-attention layer of the Transformer, each token (including \texttt{[cls]} and \texttt{[vis]}) calculates attention scores with respect to all other tokens. These attention scores are used to weight the features of each visual token for updating the class token. The self-attention mechanism can be represented as $\operatorname{Attention}(\mathbf{q}, \mathbf{k}, \mathbf{v})=\operatorname{softmax}\left(\frac{\mathbf{q k}^\top}{\sqrt{d_k}}\right) \mathbf{v}$. The $\mathbf{q}, \mathbf{k}, \mathbf{v}$ represent the query, key, and value matrices, respectively. These matrices are generated from the embedding vectors of tokens through linear layers. $d_k$ is the square root of the dimension of the key vectors. It is used to scale the dot products to prevent gradient vanishing or exploding.

For the class token, its update can be represented as:

\begin{equation}
\mathbf{\texttt{[cls]}}^{\prime}=\operatorname{Attention}(\mathbf{\texttt{[cls]}}, \mathbf{k}, \mathbf{v})+\mathbf{\texttt{[cls]}},
\end{equation}

where $\mathbf{\texttt{[cls]}}^{\prime}$ represents the updated class token embedding, and $+$ denotes the residual connection. In the self-attention mechanism, the update of the class token can be seen as the weighted average of the features of all patch tokens, where the attention scores determine the weights:

\begin{equation}
\mathbf{\texttt{[cls]}}^{\prime}=\sum_{i=1}^{H \times W} \alpha_i \mathbf{\texttt{[vis]}}_i+\mathbf{\texttt{[cls]}}.
\label{eq:cls}
\end{equation}

The $\alpha_i$ represents the attention score of the class token to the $i$-th patch token and $\mathbf{\texttt{[vis]}}_i$ denotes the embedding vector of the $i$-th patch token. The class token can be regarded as the weighted average of the features of all patch tokens, known as the "sample centroid," where the self-attention mechanism dynamically computes the weights. This weighted average allows the class token to capture the global features of the image, rather than just a simple arithmetic mean.

Given \texttt{[vis]} and \texttt{[cls]}, the extent of the sample centroid manifold, also known as the class token manifold extent (CTME), can be represented as: 

\begin{equation}
CTME = \|\texttt{[cls]}\|_*,
\label{eq:ctme}
\end{equation}

where $\|\cdot\|_*$ represents the nuclear norm. The sample centroid manifold contains the magnitudes of each individual visual/patch token manifold. If Equation~\ref{eq:ctme} is considered as the optimization objective, that is, when the sample centroid manifold is maximized, it implicitly minimizes each \texttt{[vis]} manifold, thereby enhancing the intra-manifold similarity. Further understanding is provided in Section~\ref{subsec:mtmc}.

\subsection{Maximum Class Token Manifold Capacity}
\label{subsec:mtmc}

This subsection provides a detailed description of Maximum Class Token Manifold Capacity. Specifically, for the labeled samples provided in the GCD task, we assume that the annotations provided by human annotators are sufficiently accurate and unbiased. Therefore, supervised methods can effectively shape the manifold of these samples. As a result, we focus on enhancing the manifold capacity of the unlabeled samples.

The functions $ f(\cdot) $ and $ g(\cdot) $ perform as the feature extractor and projection head, respectively. Both the feature $\mathbf{h}_i = f(\mathbf{x}_i)$ and the projected embedding $ \mathbf{z}_i = g(\mathbf{h}_i) $ are under L-2 normalization.

For the unlabeled samples in the mini-batch $\mathcal{B}^{u}$, after the feature extractor cuts them into $H \times W$ patches, the features are sent to the projection layer to obtain embeddings, which are the visual tokens of unlabeled samples:

\begin{equation}
\texttt{[vis]}^u = \mathbf{z}_i^u \stackrel{\text { def }}{=} g(f(\mathbf{x}_i^u)) \in \mathcal{Z} ,
\label{eq:visu}
\end{equation}

where, $\mathcal{Z}$ is commonly the $D$-dimensional hypersphere $\mathbb{S}^{D-1} \stackrel{\text { def }}{=}\left\{\mathbf{z} \in \mathbb{R}^D: \mathbf{z}^T \mathbf{z}=1\right\}$ or $\mathbb{R}^D$.

Furthermore, from Equation~\ref{eq:cls}, we can obtain the refined sample centroid that represents \texttt{[vis]}$^u$, which is denoted as \texttt{[cls]}$^u$, and define the loss function for maximum class token manifold capacity:

\begin{equation}
\mathcal{L}_\text{{MTMC}} \stackrel{\text { def }}{=}-\|\texttt{[cls]}^u\|_* \stackrel{\text { def }}{=}-\sum_{r=1}^{\operatorname{rank}(\texttt{[cls]}^u)} \sigma_r(\texttt{[cls]}^u),
\label{eq:mtmc}
\end{equation}

where $\sigma_r(\texttt{[cls]}^u)$ is the $r$-th singular value of \texttt{[cls]}$^u$.

Minimizing the MTMC loss maximizes the nuclear norm of the class token. Without MTMC, the manifold within clusters has a larger range, resulting in a lower nuclear norm of the centroid matrix. Geometrically, $\texttt{[vis]}$ manifolds represent subspaces in high-dimensional space, with each corresponding to the value range of a slice feature. Maximizing CTME ensures that the class token $\texttt{[cls]}$ finds the most representative "center" in the space of $\texttt{[vis]}$ manifolds, minimizing the distance (reflected in the nuclear norm) from all $\texttt{[vis]}$ to this "center." This increases the nuclear norm of the centroid matrix and enhances the representation by unraveling collapsed representations.

The MTMC implementation is concise, comprising only three lines. After calculating the GCD loss $\mathcal{L}_{\text{GCD}}$, the class token is obtained, singular value decomposition is performed, and the sum of singular values is added to the loss, resulting in $\mathcal{L}_{\text{GCD}} + \lambda \mathcal{L}_{\text{MTMC}}$.

\begin{lstlisting}[language=Python]
def forward(self, x_unlabel, loss):
    f_unlabel = self.featurizer(x_unlabel) # cls and vis tokens
    f_cls_unlabel = f_unlabel[:,0] # get cls token
    z_cls_unlabel = self.projector(f_cls_unlabel) #embedding
    _,s,_ = torch.svd(z_cls_unlabel) # singular value decomposition
    loss += self.lambda * torch.sum(s) # MTMC
    return loss
\end{lstlisting}

\subsection{Maximum Class Token Manifold Capacity Increases Von Neumann Entropy} 

\begin{wrapfigure}{r}{6cm}
 \vspace{-10pt}
    \includegraphics[width=1.0\linewidth]{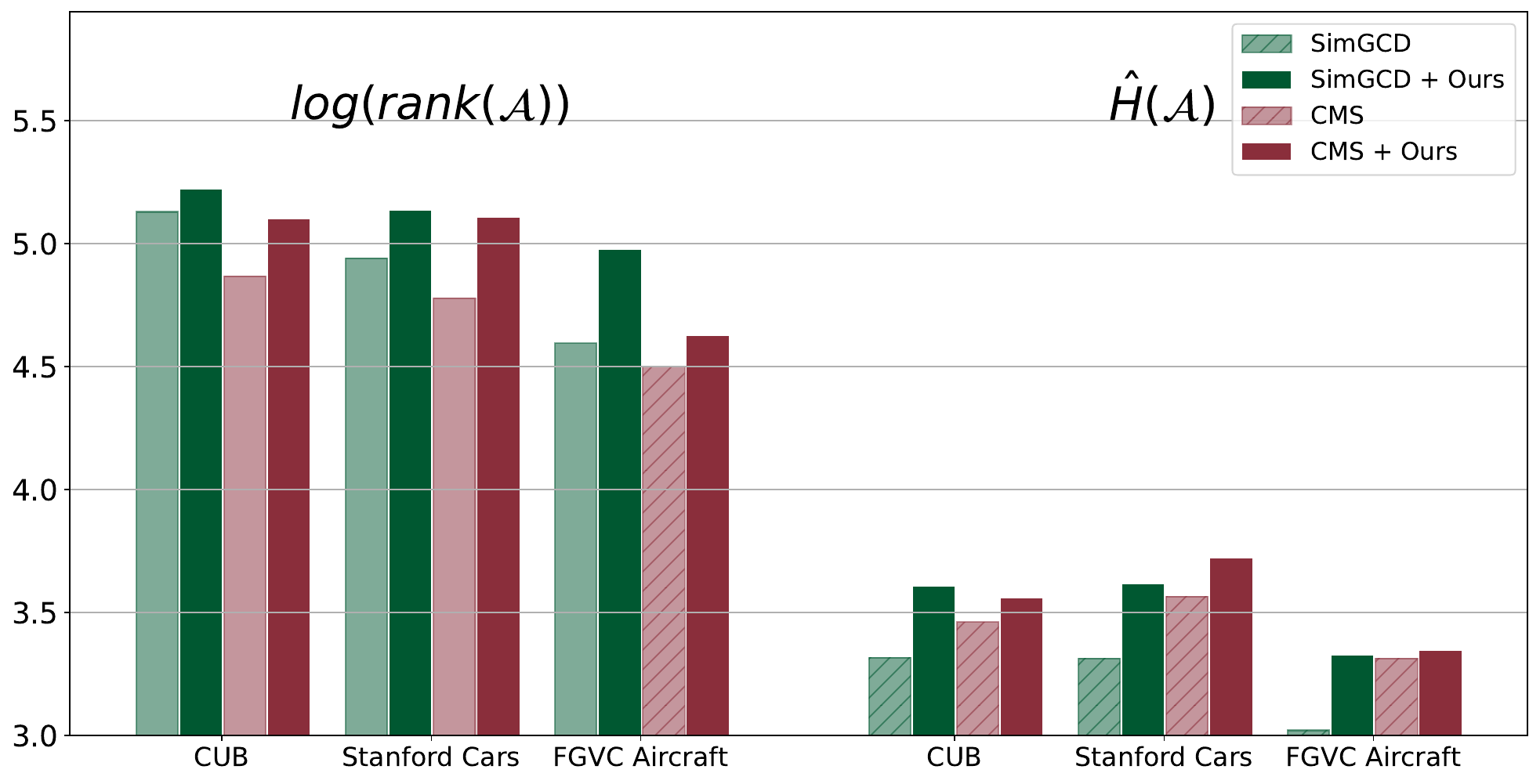}
    \caption{Comparison between $log(\operatorname{rank}(\mathcal{A}))$ and $\hat{H}(\mathcal{A})$. The count of the largest eigenvalues necessary to account for 99\% of the total eigenvalue energy serves as a surrogate for the rank.}
    \label{fig:entropy_logrank}
    \vspace{-0.7cm}
\end{wrapfigure}
The autocorrelation matrix of the sample's class token manifold is $\mathcal{A} \triangleq \sum_{i=1}^N \frac{1}{N} \mathbf{\texttt{[cls]}}_i \mathbf{\texttt{[cls]}}_i^{\top}=\mathbf{CLS}^{\top} \mathbf{CLS} / N$. We employ von Neumann entropy~\cite{petz2001entropy,boes2019neumann} to measure manifold capacity. This gives the advantage of focusing exclusively on the eigenvalues obtained after decomposition, allowing for graceful handling of eigenvalues that are extremely close to $0$. The von Neumann entropy can be expressed as $\hat{H}\left(\mathcal{A}\right) \triangleq-\sum_j \lambda_j \log \lambda_j$, representing the Shannon entropy of the eigenvalues of $\mathcal{A}$, with values ranging between $0$ and $\log d$. A larger $\hat{H}(\mathcal{A})$ indicates a greater manifold capacity of the features. 


Von Neumann entropy is an effective measure for assessing the uniformity of distributions and managing extreme values. As illustrated in Figure~\ref{fig:entropy_logrank}, the incorporation of MTMC results in a von Neumann entropy for the feature embeddings that is significantly higher than that of the original scheme. Furthermore, it is possible to relate von Neumann entropy to the rank of the \texttt{[cls]}. When $\mathcal{A}$ possesses uniformly distributed eigenvalues with full rank, the entropy is maximized, which can be explicitly expressed as below.

\begin{theorem}
For a given \texttt{[cls]} autocorrelation $\mathcal{A} =\mathbf{CLS}^{\top} \mathbf{CLS} / N  \in \mathbb{R}^{d \times d}$ of rank $k$ ($\leq d$),
\begin{equation}
    \log \left(\operatorname{rank}\left(\mathcal{A}\right)\right) \geq \hat{H}\left(\mathcal{A}\right)
\end{equation}
where equality holds if the eigenvalues of $\mathcal{A}$ are uniform  with $\forall_{j=1}^k \lambda_j=1 / k$ and $\forall_{j=k+1}^d \lambda_j=0$ .
\end{theorem}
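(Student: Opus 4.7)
The plan is to reduce the inequality to the classical Shannon-entropy fact that, among probability distributions on a $k$-element support, the uniform distribution achieves the maximal entropy $\log k$. First I would observe that $\mathcal{A} = \mathbf{CLS}^{\top} \mathbf{CLS}/N$ is positive semidefinite, so its spectrum $\{\lambda_j\}_{j=1}^{d}$ consists of non-negative reals, with exactly $k = \operatorname{rank}(\mathcal{A})$ strictly positive entries and the remaining $d-k$ equal to zero. Because the class tokens are L2-normalized (as declared at the beginning of Section~\ref{subsec:mtmc}), one has $\operatorname{tr}(\mathcal{A}) = \tfrac{1}{N} \sum_{i=1}^{N} \|\mathbf{\texttt{[cls]}}_i\|_{2}^{2} = 1$, so the nonzero eigenvalues form a probability vector of length $k$.

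With this identification in hand, I would adopt the standard convention $0 \log 0 = 0$ so that the $d-k$ vanishing eigenvalues contribute nothing to $\hat{H}(\mathcal{A})$, and then rewrite the entropy as a sum over the support only. The inequality then follows from a single application of the non-negativity of the Kullback--Leibler divergence against the uniform distribution on the support:
\begin{equation}
\log\bigl(\operatorname{rank}(\mathcal{A})\bigr) - \hat{H}(\mathcal{A}) \;=\; \sum_{j:\,\lambda_j > 0} \lambda_j \log\!\left( \frac{\lambda_j}{1/k} \right) \;\geq\; 0,
\end{equation}
which is Gibbs' inequality (equivalently, Jensen's inequality applied to the strictly concave function $t \mapsto -t \log t$). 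Strict concavity simultaneously delivers the characterization of the equality case: it holds iff $\lambda_j = 1/k$ for every $j$ in the support and $\lambda_j = 0$ otherwise, matching exactly the spectrum specified in the theorem.

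The principal obstacle is essentially bookkeeping rather than analysis: one must justify the trace-one normalization from the L2-normalization of the class tokens, handle the $0 \log 0$ contributions of the $d-k$ zero eigenvalues so that they do not spuriously enter the argument, and make explicit that the inequality compares an entropy supported on at most $k$ atoms with its maximal value $\log k$. Once these conventions are pinned down, the theorem is simply the maximum-entropy property of the uniform distribution, and no deeper machinery is required.
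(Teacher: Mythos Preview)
Your proposal is correct and takes essentially the same approach as the paper: both arguments reduce to the maximum-entropy property of the uniform distribution on a $k$-element support, with the paper citing this fact as a black-box lemma and you unpacking it via the non-negativity of the KL divergence (Gibbs' inequality), which is the standard proof of that lemma. The bookkeeping steps you flag---the trace-one normalization from $\|\texttt{[cls]}_i\|_2=1$, the $0\log 0$ convention for the $d-k$ zero eigenvalues, and the equality case from strict concavity---are exactly the ones the paper handles as well.
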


The details of the proof process are in Appendix \ref{sec:theorem}. A higher von Neumann entropy generally implies a larger manifold capacity. We provide a comparison of the $log(\operatorname{rank}(\mathcal{A}))$ and $\hat{H}(\mathcal{A})$ for different schemes in Figure~\ref{fig:entropy_logrank}, and it can be observed that MTMC has a higher value, indicating the high-rank nature of the features and the uniformity of neuron activation in each dimension of representation.
\section{Experiments}
\label{sec:mainexp}

\begin{table}[t]
    \caption{Experimental results on coarse- and fine-grained datasets, evaluated \textit{with} the $K$ for clustering.}
  \label{tab:gcd}
  \begin{center}
     \scalebox{0.70}{
 \renewcommand\tabcolsep{4.35pt}
    \begin{tabular}[b]{p{0.15\textwidth} ccc ccc ccc ccc ccc ccc ccc}
    \toprule
        \multirow{2}{*}{Method} &
        \multicolumn{3}{c}{CIFAR100} &
        \multicolumn{3}{c}{ImageNet100} &
        \multicolumn{3}{c}{CUB} &
        \multicolumn{3}{c}{Stanford Cars} &
        \multicolumn{3}{c}{FGVC Aircraft} &
        \multicolumn{3}{c}{Herbarium 19} \\
        \cmidrule(lr){2-4} \cmidrule(lr){5-7} \cmidrule(lr){8-10}
        \cmidrule(lr){11-13} \cmidrule(lr){14-16} \cmidrule(lr){17-19}
        & All & Old & New & All & Old & New & All & Old & New
        & All & Old & New & All & Old & New & All & Old & New\\
    \midrule
    \multicolumn{19}{l}{\textit{ Clustering with the ground-truth number of classes $K$ given }} \\
    \midrule
    \small{Agglomerative}~\cite{ward1963hierarchical} & 56.9 & 56.6 & 57.5 & 73.1 & 77.9 & 70.6 & 37.0 & 36.2 & 37.3 & 12.5 & 14.1 & 11.7 & 15.5 & 12.9 & 16.9 & 14.4 & 14.6 & 14.4 \\
        RankStats+~\cite{han2020automatically} & 58.2 & 77.6 & 19.3 & 37.1 & 61.6 & 24.8 & 33.3 & 51.6 & 24.2 & 28.3 & 61.8 & 12.1 & 26.9 & 36.4 & 22.2 & 27.9 & 55.8 & 12.8 \\
        UNO+~\cite{fini2021unified} & 69.5 & 80.6 & 47.2 & 70.3 & 95.0 & 57.9 & 35.1 & 49.0 & 28.1 & 35.5 & 70.5 & 18.6 & 40.3 & 56.4 & 32.2 & 28.3 & 53.7 & 14.7 \\
        ORCA~\cite{cao2022openworld} & 69.0 & 77.4 & 52.0 & 73.5 & 92.6 & 63.9 & 35.3 & 45.6 & 30.2 & 23.5 & 50.1 & 10.7 & 22.0 & 31.8 & 17.1 & 20.9 & 30.9 & 15.5\\
        GCD~\cite{vaze2022generalized} & 73.0 & 76.2 & 66.5 & 74.1 & 89.8 & 66.3 & 51.3 & 56.6 & 48.7 & 39.0 & 57.6 & 29.9 & 45.0 & 41.1 & 46.9 & 35.4 & 51.0 & 27.0\\
        ProtoGCD~\cite{ma2025protogcd} & 81.9 & 82.9 & 80.0 & 84.0 & 92.2 & 79.9 & 63.2 & 68.5 & 60.5 & 53.8 & 73.7 & 44.2 & 56.8 & 62.5 & 53.9 & 44.5 & 59.4 & 36.5 \\
        PrCAL~\cite{zhang2023promptcal}& 81.2 & 84.2 & 75.3 & 83.1 & 92.7 & 78.3 & 62.9 & 64.4 & 62.1 & 50.2 & 70.1 & 40.6 & 52.2 & 52.2 & 52.3 & 37.0 & 52.0 & 28.9\\
        ActiveGCD~\cite{ma2024active} & 71.3 & 75.7 & 66.8 & 83.3 & 90.2 & 76.5 & 66.6 & 66.5 & 66.7 & 48.4 & 57.7 & 39.3 & 53.7 & 51.5 & 56.0 & - & - & - \\
        PIM~\cite{chiaroni2023parametric} & 78.3 & 84.2 & 66.5 & 83.1 & 95.3 & 77.0 & 62.7 & 75.7 & 56.2 & 43.1 & 66.9 & 31.6 & - & - & - & 42.3 & 56.1 & 34.8\\
    \midrule
        SelEx~\cite{wen2023parametric} & 80.0 & 84.8 & 70.4 & 82.3 & 93.9 & 76.5 & 78.7 & \textbf{81.3} & 77.5 & 55.9 & 76.9 & 45.8 & 60.8 & \textbf{70.3} & 56.2 & 36.2 & 46.0 & 30.9 \\
         \rowcolor{gray!15}\quad+ Ours &  80.7 &  84.3 &  72.1 &  82.8 &  94.1 &  77.8 &  \textbf{80.6} &  81.0 &  \textbf{80.4} &  57.0 &  \textbf{77.3} &  47.2 &  \textbf{61.8} &  68.2 &  \textbf{59.2} &  36.8 &  47.5 &  31.0 \\
          & \ccol \textbf{\textcolor{ForestGreen}{+0.7}} & \ccol \textcolor{red!50!white}{-0.5} & \ccol \textbf{\textcolor{ForestGreen}{+1.7}} & \ccol \textbf{\textcolor{ForestGreen}{+0.5}} & \ccol \textbf{\textcolor{ForestGreen}{+0.2}} & \ccol \textbf{\textcolor{ForestGreen}{+1.3}} & \ccol \textbf{\textcolor{ForestGreen}{+1.9}} & \ccol \textcolor{red!50!white}{-0.3} & \ccol \textbf{\textcolor{ForestGreen}{+2.9}} & \ccol \textbf{\textcolor{ForestGreen}{+1.1}} & \ccol \textbf{\textcolor{ForestGreen}{+0.4}} & \ccol \textbf{\textcolor{ForestGreen}{+1.4}} & \ccol \textbf{\textcolor{ForestGreen}{+1.0}} & \ccol \textcolor{red!50!white}{-2.1} & \ccol \textbf{\textcolor{ForestGreen}{+3.0}} & \ccol \textbf{\textcolor{ForestGreen}{+0.6}} & \ccol \textbf{\textcolor{ForestGreen}{+1.5}} & \ccol \textbf{\textcolor{ForestGreen}{+0.1}} \\
        SimGCD~\cite{wen2023parametric} & 80.1 & 81.5 & 77.2 & 83.3 & 92.1 & 78.9 & 60.7 & 65.6 & 57.7 & 51.2 & 69.4 & 42.4 & 54.0 & 58.8 & 51.5 & 44.7 & 57.4 & 37.9 \\
         \rowcolor{gray!15} \quad+ Ours &  80.2 &  81.5 &  \textbf{77.5} &  \textbf{86.7} &  93.1 &  \textbf{83.6} &  62.1 &  65.8 &  60.3 &  52.3 &  70.0 &  43.7 &  55.1 &  58.9 &  53.1 &  \textbf{45.6} &  57.8 &  \textbf{39.0} \\
          & \ccol \textbf{\textcolor{ForestGreen}{+0.1}} & \ccol \textbf{\textcolor{ForestGreen}{+0.0}} & \ccol \textbf{\textcolor{ForestGreen}{+0.3}} & \ccol \textbf{\textcolor{ForestGreen}{+3.4}} & \ccol \textbf{\textcolor{ForestGreen}{+1.0}} & \ccol \textbf{\textcolor{ForestGreen}{+4.7}} & \ccol \textbf{\textcolor{ForestGreen}{+1.4}} & \ccol \textbf{\textcolor{ForestGreen}{+0.2}} & \ccol \textbf{\textcolor{ForestGreen}{+2.6}} & \ccol \textbf{\textcolor{ForestGreen}{+1.1}} & \ccol \textbf{\textcolor{ForestGreen}{+0.6}} & \ccol \textbf{\textcolor{ForestGreen}{+1.3}} & \ccol \textbf{\textcolor{ForestGreen}{+1.1}} & \ccol \textbf{\textcolor{ForestGreen}{+0.1}} & \ccol \textbf{\textcolor{ForestGreen}{+1.6}} & \ccol \textbf{\textcolor{ForestGreen}{+0.9}} & \ccol \textbf{\textcolor{ForestGreen}{+0.4}} & \ccol \textbf{\textcolor{ForestGreen}{+1.1}} \\
        
       CMS~\cite{choi2024contrastive}$\dagger$ & 79.5 & 85.4 & 67.7 & 83.0 & \textbf{95.6} & 76.6 & 67.1 & 74.9 & 63.2 & 56.7 & 76.8 & 37.5 & 53.6 & 60.3 & 47.0 & 36.5 & 55.4 & 26.4 \\
        \rowcolor{gray!15} \quad+ Ours &  79.0 &  \textbf{85.5} &  66.1 &  84.8 &  \textbf{95.6} &  79.5 &  71.1 &  74.1 &  66.9 &  57.4 &  79.4 &  36.2 &  55.7 &  63.7 &  47.9 &  36.3 &  56.5 &  25.4 \\
         & \ccol \textcolor{red!50!white}{-0.5} & \ccol \textbf{\textcolor{ForestGreen}{+0.1}} & \ccol \textcolor{red!50!white}{-1.6} & \ccol \textbf{\textcolor{ForestGreen}{+1.8}} & \ccol \textbf{\textcolor{ForestGreen}{+0.0}} & \ccol \textbf{\textcolor{ForestGreen}{+2.9}} & \ccol \textbf{\textcolor{ForestGreen}{+4.0}} & \ccol \textcolor{red!50!white}{-0.8} & \ccol \textbf{\textcolor{ForestGreen}{+3.7}} & \ccol \textbf{\textcolor{ForestGreen}{+0.7}} & \ccol \textbf{\textcolor{ForestGreen}{+2.6}} & \ccol \textcolor{red!50!white}{-1.3} & \ccol \textbf{\textcolor{ForestGreen}{+2.1}} & \ccol \textbf{\textcolor{ForestGreen}{+3.4}} & \ccol \textbf{\textcolor{ForestGreen}{+0.9}} & \ccol \textcolor{red!50!white}{-0.2} & \ccol \textbf{\textcolor{ForestGreen}{+1.1}} & \ccol \textcolor{red!50!white}{-1.0} \\

        SPTNet~\cite{choi2024contrastive} & 81.3 & 84.3 & 75.6 & 85.4 & 93.2 & 81.4 & 62.0 & 69.2 & 56.0 & 56.2 & 70.3 & 46.6 & 51.6 & 60.7 & 45.9 & 43.4 &  58.7 &  35.2 \\
         \rowcolor{gray!15} \quad+ Ours &  \textbf{82.1} &  84.8 &  76.2 &  85.4 &  93.4 &  81.3 &  63.3 &  70.7 &  59.6 &  \textbf{58.8} &  75.4 &  \textbf{50.8} &  54.7 &  65.3 &  48.5 & 44.2 &  \textbf{58.9} &  36.3 \\
         & \ccol \textbf{\textcolor{ForestGreen}{+0.8}} & \ccol \textbf{\textcolor{ForestGreen}{+0.5}} & \ccol \textbf{\textcolor{ForestGreen}{+0.6}} & \ccol \textbf{\textcolor{ForestGreen}{+0.0}} & \ccol \textbf{\textcolor{ForestGreen}{+0.2}} & \ccol \textcolor{red!50!white}{-0.1} & \ccol \textbf{\textcolor{ForestGreen}{+1.3}} & \ccol \textbf{\textcolor{ForestGreen}{+1.5}} & \ccol \textbf{\textcolor{ForestGreen}{+3.6}} & \ccol \textbf{\textcolor{ForestGreen}{+2.6}} & \ccol \textbf{\textcolor{ForestGreen}{+5.1}} & \ccol \textbf{\textcolor{ForestGreen}{+4.2}} & \ccol \textbf{\textcolor{ForestGreen}{+3.1}} & \ccol \textbf{\textcolor{ForestGreen}{+4.5}} & \ccol \textbf{\textcolor{ForestGreen}{+2.6}} & \ccol \textbf{\textcolor{ForestGreen}{+0.8}} & \ccol \textbf{\textcolor{ForestGreen}{+0.1}} & \ccol \textbf{\textcolor{ForestGreen}{+1.1}} \\
\midrule
           \quad \textbf{Avg.} $\triangle$ & \ccol \textbf{\textcolor{ForestGreen}{+0.3}} & \ccol \textbf{\textcolor{ForestGreen}{+0.1}} & \ccol \textbf{\textcolor{ForestGreen}{+0.3}} & \ccol \textbf{\textcolor{ForestGreen}{+1.4}} & \ccol \textbf{\textcolor{ForestGreen}{+0.4}} & \ccol \textbf{\textcolor{ForestGreen}{+2.2}} & \ccol \textbf{\textcolor{ForestGreen}{+2.2}} & \ccol \textbf{\textcolor{ForestGreen}{+0.2}} & \ccol \textbf{\textcolor{ForestGreen}{+3.2}} & \ccol \textbf{\textcolor{ForestGreen}{+1.4}} & \ccol \textbf{\textcolor{ForestGreen}{+2.2}} & \ccol \textbf{\textcolor{ForestGreen}{+1.1}} & \ccol \textbf{\textcolor{ForestGreen}{+1.8}} & \ccol \textbf{\textcolor{ForestGreen}{+1.5}} & \ccol \textbf{\textcolor{ForestGreen}{+2.0}} & \ccol \textbf{\textcolor{ForestGreen}{+0.5}} & \ccol \textbf{\textcolor{ForestGreen}{+0.8}} & \ccol \textbf{\textcolor{ForestGreen}{+0.3}} \\

    \bottomrule
    \end{tabular}}
\end{center}
\end{table}

\subsection{Setup}

\textbf{Benchmarks}. MTMC is evaluated on coarse- and fine-grained benchmarks. These include two conventional datasets, CIFAR100~\cite{krizhevsky2009learning} and ImageNet100~\cite{geirhos2018imagenettrained}, and four fine-grained datasets, CUB-200-2011~\cite{wah2011caltech}, Stanford Cars~\cite{krause20133d}, FGVC Aircraft~\cite{maji2013fine}, and Herbarium19~\cite{tan2019herbarium}. To segregate target classes into sets of known and unknown, we adhere to the splits defined by the Semantic Shift Benchmark~\cite{vaze2021open} when working with CUB, Stanford Cars, and FGVC Aircraft. The splits from the previous study~\cite{vaze2022generalized} is employed for the remaining datasets, we designate 80\% of the classes as known under the CIFAR100 benchmark. For the rest of the benchmarks, the proportion of known classes stands at 50\%. Our labeled set $\mathcal{D}_{l}$, comprises 50\% images from the known classes for all benchmarks.


\textbf{Evaluation Protocols}. 
We assess MTMC's effectiveness via a two-step process. First, we cluster the complete collection of images defined as $\mathcal{D}$. Then, we measure the accuracy on the set $\mathcal{D}_{u}$. In line with previous research~\cite{vaze2022generalized}, accuracy is determined by comparing the assignments to the actual labels using the Hungarian optimal matching~\cite{kuhn1955hungarian}. This method bases the match on the number of instances that intersect between each pair of classes. Instances that do not belong to any pair, \textit{i.e.}, unpaired classes, are viewed as incorrect predictions. On the other hand, instances belonging to the most abundant class within each ground-truth cluster are taken as correct for accuracy calculations. We present the accuracy for all unlabeled data, and the accuracy is classified as old/known and new/novel, respectively. The accuracy using the estimated number of classes and the ground-truth $K$ are reported. This allows us to compare MTMC with previous studies that have assumed the availability of the $K$ during the evaluation phase.

\textbf{Implementation Details}. 
The purpose of MTMC is to empower existing GCD schemes to improve the completeness of representation. We closely adhere to their initial implementation details for an effective comparison. We use a pre-trained DINO ViT-B/16~\cite{caron2021emerging, dosovitskiy2020image}, utilizing it as our image encoder along with a projection head, an approach consistent with existing methods~\cite{vaze2022generalized, zhang2023promptcal, pu2023dynamic}. All of our experiments are performed with a single NVIDIA RTX4090. \textbf{We follow the original training parameter details of each scheme to illustrate the generality and applicability of MTMC.} The count of the largest eigenvalues
necessary to account for 99\% of the total eigenvalue energy serves as a surrogate for the rank in Equation~\ref{eq:mtmc}.


\subsection{Main Results}

\begin{table}[t]
    \caption{GCD Accuracy on coarse- and fine-grained datasets, evaluated \textit{without} the $K$ for clustering.}
  \label{tab:gcd2}
  \begin{center}
     \scalebox{0.665}{
 \renewcommand\tabcolsep{4.35pt}
    \begin{tabular}[b]{p{0.15\textwidth} ccc ccc ccc ccc ccc ccc}
    \toprule
        \multirow{2}{*}{Method} &
        \multicolumn{3}{c}{CIFAR100} &
        \multicolumn{3}{c}{ImageNet100} &
        \multicolumn{3}{c}{CUB} &
        \multicolumn{3}{c}{Stanford Cars} &
        \multicolumn{3}{c}{FGVC Aircraft} &
        \multicolumn{3}{c}{Herbarium 19} \\
        \cmidrule(lr){2-4} \cmidrule(lr){5-7} \cmidrule(lr){8-10}
        \cmidrule(lr){11-13} \cmidrule(lr){14-16} \cmidrule(lr){17-19}
        & All & Old & New & All & Old & New & All & Old & New
        & All & Old & New & All & Old & New & All & Old & New\\
    \midrule
    \multicolumn{19}{l}{\textit{ Clustering without the ground-truth number of classes $K$ given }} \\
    \midrule
     \small{Agglomerative}~\cite{ward1963hierarchical} & 56.9 & 56.6 & 57.5 & 72.2 & 77.8 & 69.4 & 35.7 & 33.3 & 36.9 & 10.8 & 10.6 & 10.9 & 14.1 & 10.3 & 16.0 & 13.9 & 13.6 & 14.1 \\
        GCD~\cite{vaze2022generalized} & 70.8 & 77.6 & 57.0 & 77.9 & 91.1 & 71.3 & 51.1 & 56.4 & 48.4 
                                & 39.1 & 58.6 & 29.7 & - & - & - & 37.2 & 51.7 & 29.4 \\
        GPC~\cite{zhao2023learning} & 75.4 & 84.6 & 60.1 & 75.3 & 93.4 & 66.7 & 52.0 & 55.5 & 47.5 &
                                    38.2 & 58.9 & 27.4 & 43.3 & 40.7 & 44.8 & 36.5 & 51.7 & 27.9 \\
        PIM~\cite{chiaroni2023parametric} & 75.6 & 81.6 & 63.6 & 83.0 & 95.3 & 76.9 & 62.0 &                                   \textbf{75.7} & 55.1 & 42.4 & 65.3 & 31.3 & - & - & - & \textbf{42.0} & 55.5 & \textbf{34.7} \\
    \midrule
        CMS~\cite{choi2024contrastive} & 77.8 & 84.0 & 65.3 & 83.4 & 95.6 & 77.3 & 66.2 & 69.7 & 64.4 & 51.8 & \textbf{72.9} & 31.3 & 52.3 & 58.9 & 45.8 & 38.5 & \textbf{57.3} & 28.4 \\
         \rowcolor{gray!15} \quad$+$ Ours &  \textbf{79.5} &  \textbf{84.7} &  \textbf{69.1} &  \textbf{84.3} &  \textbf{95.7} &  \textbf{78.8} &  \textbf{68.7} &  74.1 &  \textbf{66.0} &  \textbf{52.5} &  72.7 &  \textbf{32.9} &  \textbf{53.4} &  \textbf{60.1} &  \textbf{46.7} &  38.0 &  56.9 &  27.9 \\
       \quad \textbf{Avg.} $\triangle$ & \ccol \textbf{\textcolor{ForestGreen}{$+$1.7}} & \ccol \textbf{\textcolor{ForestGreen}{$+$0.7}} & \ccol \textbf{\textcolor{ForestGreen}{$+$3.8}} & \ccol \textbf{\textcolor{ForestGreen}{$+$0.9}} & \ccol \textbf{\textcolor{ForestGreen}{$+$0.1}} & \ccol \textbf{\textcolor{ForestGreen}{$+$1.5}} & \ccol \textbf{\textcolor{ForestGreen}{$+$2.5}} & \ccol \textbf{\textcolor{ForestGreen}{$+$4.4}} & \ccol \textbf{\textcolor{ForestGreen}{$+$1.6}} & \ccol \textbf{\textcolor{ForestGreen}{$+$0.7}} & \ccol \textcolor{red!50!white}{-0.2} & \ccol \textbf{\textcolor{ForestGreen}{$+$1.6}} & \ccol \textbf{\textcolor{ForestGreen}{$+$1.1}} & \ccol \textbf{\textcolor{ForestGreen}{$+$1.2}} & \ccol \textbf{\textcolor{ForestGreen}{$+$0.9}} & \ccol \textcolor{red!50!white}{-0.5} & \ccol \textcolor{red!50!white}{-0.4} & \ccol \textcolor{red!50!white}{-0.5} \\
    \bottomrule
    \end{tabular}}
\end{center}
\end{table}

\textbf{Evaluation on GCD.} As shown in Tables~\ref{tab:gcd} and~\ref{tab:gcd2}, MTMC brings consistent and notable gains across all evaluated GCD methods and datasets, under both known and unknown class number settings. Key findings are as follows: 
\textit{\textbf{\ding{182} Compatibility.}} MTMC improves all baselines including SimGCD, CMS, SPTNet, and SelEx without any architectural changes or tuning. For example, on CUB with known class number, MTMC enhances SimGCD by 2.6\% and SPTNet by 1.9\%. On ImageNet100, it improves CMS by 3.2\% in the All setting and boosts SelEx by 2.8\% on novel classes. These results highlight MTMC’s strong generalization across frameworks and confirm its plug-and-play compatibility.
\textit{\textbf{\ding{183} Generality.}} MTMC yields stable gains on both coarse-grained datasets like CIFAR100 and ImageNet100 and fine-grained ones like CUB and Cars. Notably, on Stanford Cars, MTMC improves CMS by 3.0\% and SelEx by 2.6\% under unknown class number settings. Average improvements on novel classes range from 1.4\% to 2.2\% across datasets, demonstrating robustness to domain complexity and label granularity.
\textit{\textbf{\ding{184} Correctness.}} By maximizing manifold capacity, MTMC enhances intra-class representation completeness and inter-class separation, leading to improved clustering under various scenarios. The consistent gains across all baselines and benchmarks validate our theoretical view that capacity-aware representation learning is a principled and effective direction for solving GCD.
In sum, MTMC’s universal improvements across models and datasets affirm the correctness of our capacity-based view and establish it as a general and practical solution for GCD.

\begin{figure}[tbp]
  \centering
  \vspace{-0.4cm}
  \subfloat[CUB]
  {\includegraphics[width=0.30\textwidth]{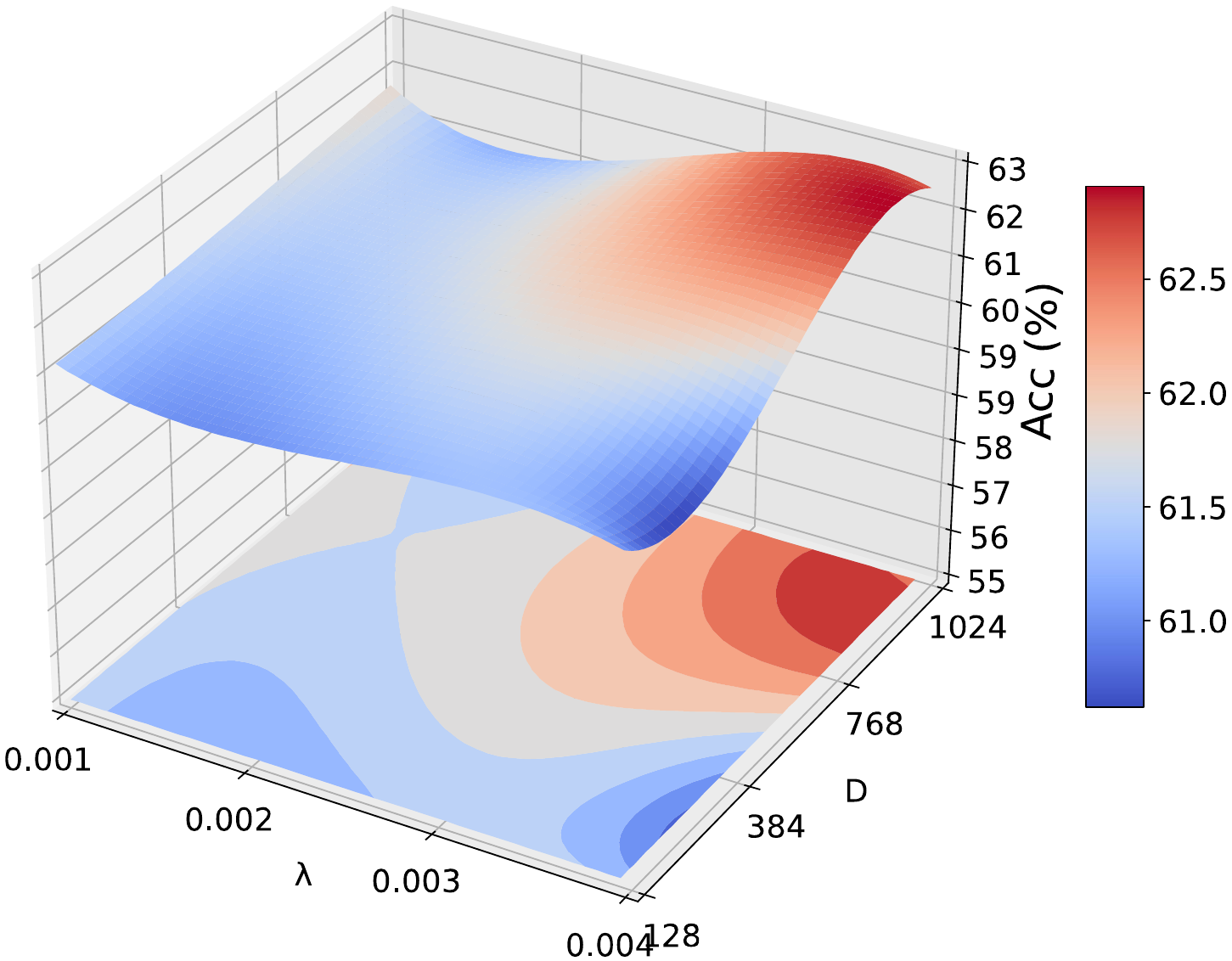}}
  \quad     
  \subfloat[Stanford Cars]
  {\includegraphics[width=0.30\textwidth]{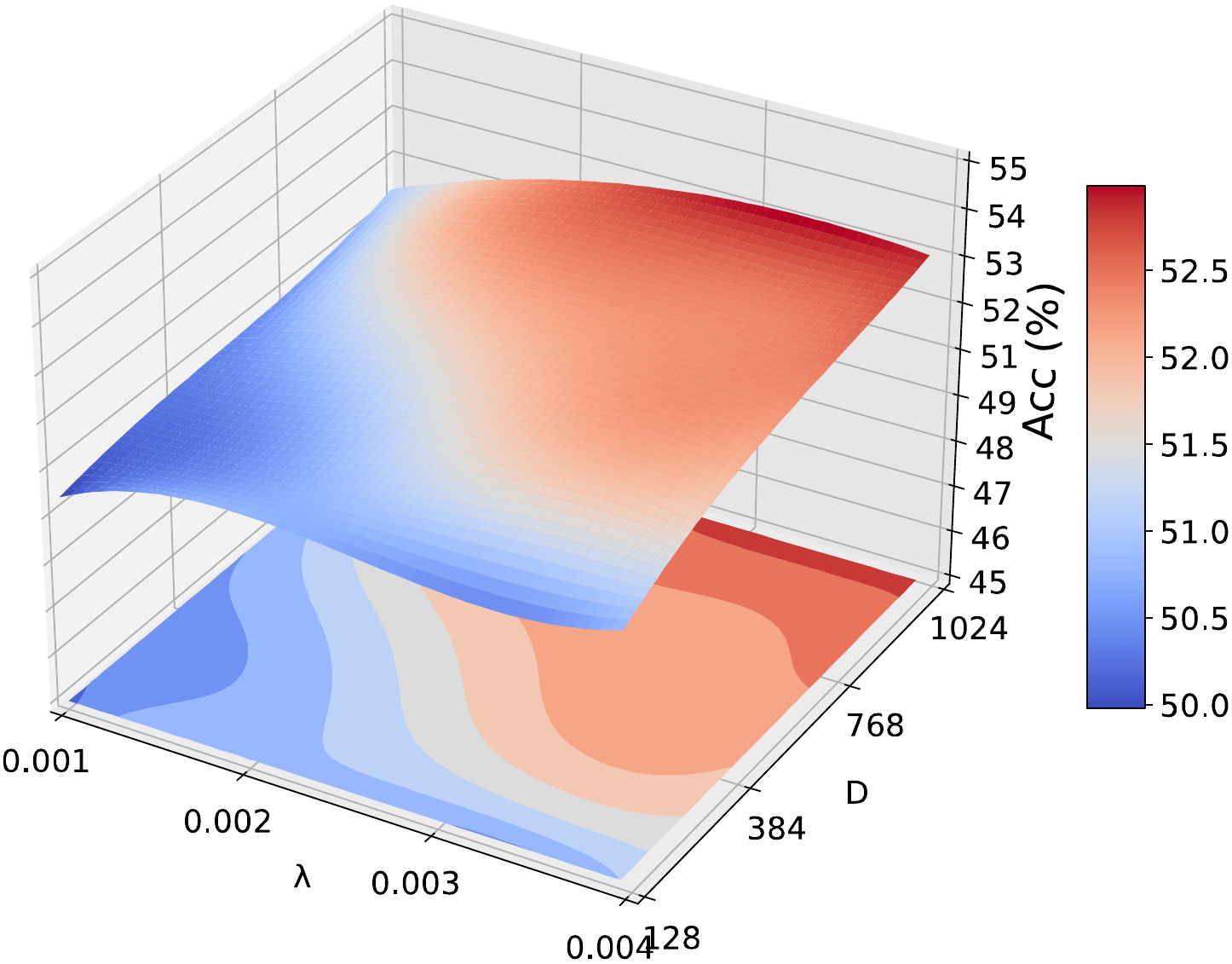}}
  \quad
  \subfloat[FGVC Aircraft]
  {\includegraphics[width=0.30\textwidth]{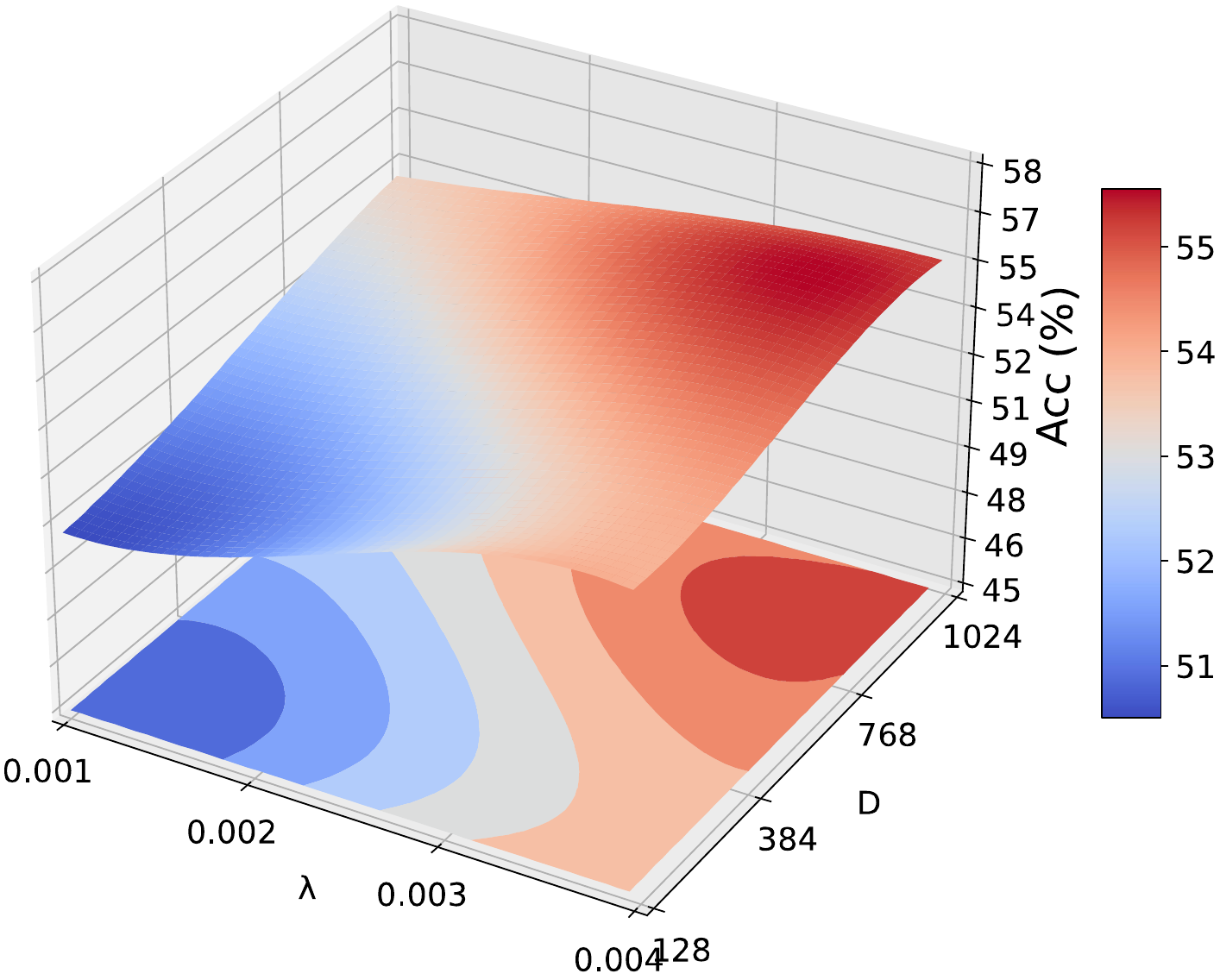}}
  \quad
  \caption{Hyperparameter sensitivity of the degree of MTMC  $\lambda$ and features dimensionality $D$.}
  \label{fig:ablation_3d}
\end{figure}

\textbf{Ablation study.} The only hyperparameter of MTMC is the coefficient $\lambda$ of the loss. To gain a deeper understanding of the correlation between the degree of maximum token manifold capacity and the dimensionality $D$ of the features, we conducted an ablation experiment on it, as shown in Figure~\ref{fig:ablation_3d}. It can be clearly observed that MTMC is not sensitive to hyperparameters and can uniformly enhance clustering accuracy. A more thought-provoking finding is that directly reducing $D$ to avoid dimensionality collapse is suboptimal. The reason is that each dimension of the manifold contributes to the representation, and a reduction in $D$ will directly lead to a loss of information. Even with MTMC, it is impossible to make the representation complete. An appropriate number of dimensions enriches the representation while using MTMC to prevent dimensionality collapse, which can maximize the model's performance enhancement.

\section{Hierarchical Analysis of Why MTMC is Effective in GCD}

We conduct a comprehensive analysis from multiple dimensions: 1) eigenvalue distribution and Frobenius norm, 2) estimation of embedded space distribution, 3) dimensional collapse, and 4) comparison with similar schemes, to understand the necessity and effectiveness of MTMC for GCD.

\begin{figure}[t]
\begin{center}
  \includegraphics[width=0.95\linewidth]{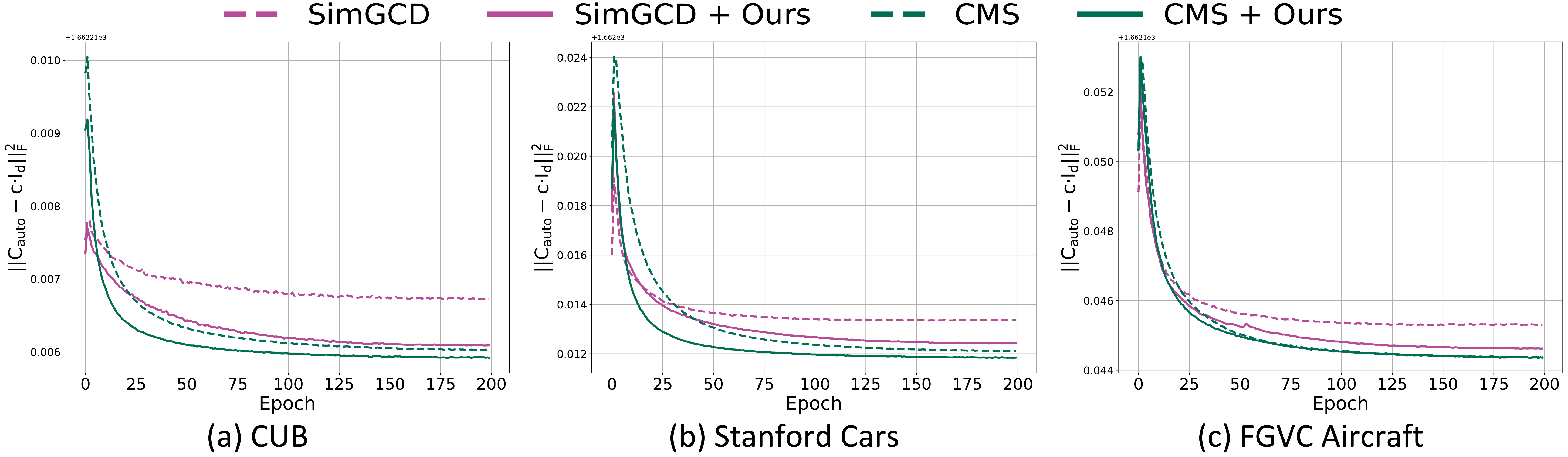}
\end{center}
\vspace{-0.2cm}
\caption{The Frobenius norm $\left\|\mathcal{A}-c \cdot I_d\right\|_F^2$ on three fine-grained benchmarks.}
\label{fig:norm}
\end{figure}

\subsection{MTMC Homogenizes Eigenvalue Distribution and Reduces Frobenius Norm} 
The autocorrelation matrix of the test sample class token manifold is denoted as $\mathcal{A}$. Given $\left\| \mathbf{\texttt{[cls]}}_i\right\|_2=1$ and $\mathcal{A} \geq 0$, it follows that $\sum_j \lambda_j = 1$ and $\forall_j \lambda_j \geq 0$~\cite{parkhi2015deep,liu2017sphereface,mettes2019hyperspherical}, where $\left\{\lambda_j\right\}$ are the eigenvalues of $\mathcal{A}$. Under ideal conditions, where $\mathcal{A} \rightarrow c \cdot I_d$ (maximum manifold capacity), the eigenvalue distribution of $\mathcal{A}$ becomes uniform, $\mathbf{z}$ uncorrelated~\cite{cogswell2015reducing}, full-rank~\cite{hua2021feature}, and isotropic~\cite{vershynin2018high}. $\mathcal{A}$ is linked to various representation characteristics. The Frobenius norm~\cite{ma1994frobenius,peng2016connections}, extensively studied in self-supervised learning methods~\cite{cogswell2015reducing,xiong2016regularizing,choi2019utilizing,zbontar2021barlow}, measures whether the representation depends on a few dimensions. A smaller Frobenius norm indicates a larger manifold capacity. We applied singular value decomposition (SVD)~\cite{golub1971singular} to the autocorrelation matrix of the feature embeddings, plotting the first 200 singular values in Figure~\ref{fig:eigenvalue_acc} and visualizing the Frobenius norm $\left|\mathcal{A} - c \cdot I_d\right|_F^2$ in Figure~\ref{fig:norm}. Compared to SimGCD and CMS, MTMC achieves a more uniform eigenvalue distribution and significantly reduces the Frobenius norm.

\subsection{MTMC Provides Accurate Distribution Estimation}

\begin{wraptable}{r}{7cm}
\vspace{-0.4cm}
  \centering
    \caption{Estimated number and error rate of $K$.}
  \label{tab:kestimation}
       \scalebox{0.45}
 {
     \begin{tabular}[b]{l cc cc cc cc cc}
    \toprule
        \multirow{2.5}{*}{Method} &
        \multicolumn{2}{c}{CIFAR100} &
        \multicolumn{2}{c}{ImageNet100} &
        \multicolumn{2}{c}{CUB} &
        \multicolumn{2}{c}{Stanford Cars} &
        \multicolumn{2}{c}{FGVC Aircraft}  \\
        \cmidrule(lr){2-3} \cmidrule(lr){4-5} \cmidrule(lr){6-7} \cmidrule(lr){8-9} \cmidrule(lr){10-11}
         & K & Err(\%) &  K & Err(\%) & K & Err(\%)  & K & Err(\%) &  K & Err(\%) \\
    \midrule
        Ground truth & 100 & - & 100 & - & 200 & - & 196 & - & 100 & - \\
    \midrule
        GCD~\cite{vaze2022generalized} & 100 & 0 & 109 & 9  & 231 & 15.5 & 230 & 17.3 & - & -   \\
        DCCL~\cite{pu2023dynamic} & 146 & 46 & 129 & 29& 172 & 9 & 192 & 0.02 & - & -  \\
        PIM~\cite{chiaroni2023parametric} & 95 & 5 & 102 & 2 & 227 & 13.5 & 169 & 13.8 & - & -  \\
        GPC~\cite{zhao2023learning} & 100 & 0 & 103 & 3 & 212 & 6 & 201 & 0.03 & - & -  \\
    \midrule
        CMS~\cite{choi2024contrastive}$\dagger$ & 94 & 6 & 98 & 2 & 176 & 12 & 149 & 23.9 \\
        \rowcolor{gray!15} \ccol \quad + Ours & \ccol 96 & \ccol 4 & \ccol 100 & \ccol 0 & \ccol 180 & \ccol 10 & \ccol 159 & \ccol 18.9 & \ccol 89 & \ccol 11  \\
    \bottomrule
    \end{tabular}%
}
  \vspace{-0.2cm}
\end{wraptable}
We present the gap between MTMC and SOTAs in estimating the number of clusters in Table~\ref{tab:kestimation}. By leveraging CMS, which requires no specific hyperparameters to estimate $K$, our optimization target becomes $\mathcal{L}_{\text{CMS}} + \mathcal{L}_{\text{MTMC}}$. Results demonstrate significant improvement with MTMC incorporated into the CMS framework, consistently enhancing class separation across various datasets. Notably, on the complex and diverse ImageNet100 dataset, our method achieves a 100\% correct estimation rate, reflecting the model's ability to discern fine-grained distinctions and align decision boundaries with the data's intrinsic structure. The improvement in estimating the number of clusters highlights the importance of representation completeness, enabling better capture of intra-class nuances and sharper inter-class separation.

\subsection{MTMC Unravels Dimensional Collapse.} 
We further explored the relationship between the accuracy and eigenvalues of GCD, respectively, and dimensional collapse, as shown in Figure \ref{fig:eigenvalue_acc} and our findings are as follows: (1) Feature Completeness and Clustering Accuracy: Complete features improve intra-class representations, which enhances clustering accuracy by providing richer, higher manifold capacity. (2) MTMC's Impact: MTMC increases manifold capacity, leading to higher singular values and more accurate clustering by better approximating the true distribution. (3) Dimension Collapse and Limitations of CMS/SimGCD: CMS and SimGCD operate in lower-dimensional spaces, limiting manifold capacity and causing incomplete representations~\cite{caron2020unsupervised,shi2023understanding}. Dimension collapse results in oversimplified models, while MTMC maximizes intra-class completeness for better decision boundaries. This breakdown highlights how MTMC addresses limitations in existing methods by optimizing the manifold capacity and the richness of intra-class representations, leading to improved model performance.

\begin{figure}[t]
\begin{center}
  \includegraphics[width=0.95\linewidth]{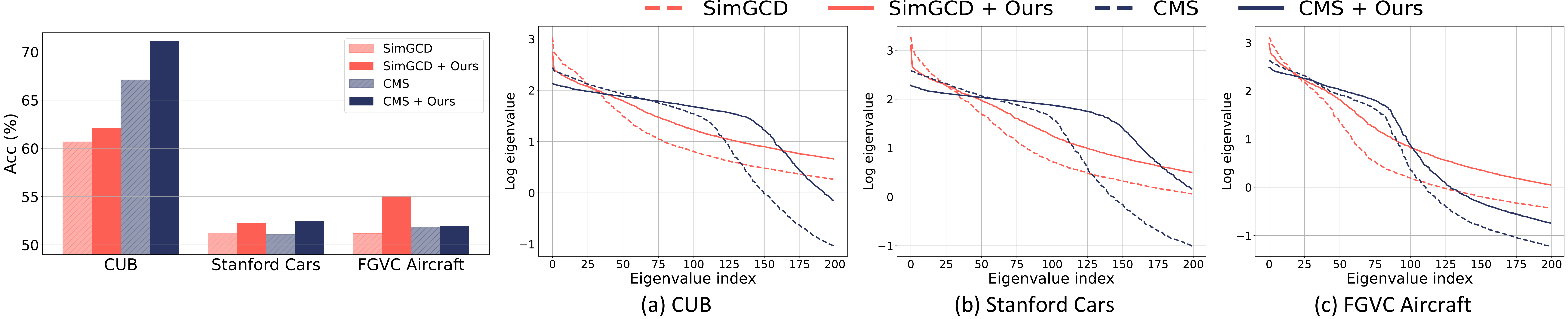}
\end{center}
\caption{MTMC effectively mitigates dimensional collapse by providing a more uniform eigenvalue distribution and improves the clustering accuracy.}
\label{fig:eigenvalue_acc}
\end{figure}

\subsection{Comparison with Isotropic Feature Distribution Schemes}

\begin{wraptable}{r}{7cm}
    \vspace{-0.4cm}
    \caption{Comparison on accuracy in GCD with representative isotropic feature distribution schemes.}
    \vspace{-0.2cm}
  \label{tab:gcd3}
  \begin{center}
     \scalebox{0.5}
 {
 \renewcommand\tabcolsep{4.35pt}
    \begin{tabular}[b]{p{0.17\textwidth} ccc ccc ccc ccc}
    \toprule
        \multirow{2.5}{*}{Method} &
        \multicolumn{3}{c}{CUB} &
        \multicolumn{3}{c}{Stanford Cars} &
        \multicolumn{3}{c}{FGVC Aircraft} &
        \multicolumn{3}{c}{\textbf{ Average}} \\
        \cmidrule(lr){2-4} \cmidrule(lr){5-7} \cmidrule(lr){8-10}
        \cmidrule(lr){11-13} 
        & All & Old & New & All & Old & New & All & Old & New
        & All & Old & New \\
    \midrule
    \midrule
        SimGCD~\cite{wen2023parametric} & 60.7 & 65.6 & 57.7 & 51.2 & 69.4 & 42.4 & 54.0 & 58.8 & 51.5 & 55.3 & 64.6 & 50.5 \\
        \quad+CorInfoMax & 60.7 & 64.8 & 58.6 & 50.0 & 67.4 & 41.6 & 54.4 & \textbf{59.0} & 52.1 & 55.0 & 63.7 & 50.8 \\
        \quad+VICReg & 61.1 & \textbf{66.0} & 58.1 & 52.0 & 68.6 & \textbf{44.1} & 54.6 & 56.2 & \textbf{53.8} & 55.9 & 63.6 & 52.0 \\
         \rowcolor{gray!15} \quad+Ours &  \textbf{62.1} &  65.8 &  \textbf{60.3} &  \textbf{52.3} & \textbf{70.0} &  43.7 &  \textbf{55.1} &  58.9 &  53.1 &  \textbf{56.5} &  \textbf{64.9} &  \textbf{52.4} \\
         \midrule
         CMS~\cite{choi2024contrastive} & 67.1 & 74.9 & 63.2 & 56.7 & 76.8 & 37.5 & 53.6 & 60.3 & 47.0 & 59.1 & 70.7 & 49.2 \\
        \quad+CorInfoMax & 65.7 & 76.4 & 58.7 & 55.8 & 73.1 & 39.2 & 52.4 & 61.9 & 42.8 & 58.0 & 70.5 & 46.9 \\
        \quad+VICReg & 68.3 & \textbf{78.1} & 55.0 &\textbf{57.8} & 76.7 & \textbf{39.7} & 55.2 & \textbf{65.2} & 45.1 & 60.4 & \textbf{73.3} & 46.6 \\
        \rowcolor{gray!15} \quad+Ours &  \textbf{71.1} &  74.1 &  \textbf{66.9} &  57.4 &  \textbf{79.4} &  36.2 &  \textbf{55.7} &  63.7 &  \textbf{47.9} &  \textbf{61.4} &  72.4 &  \textbf{50.3} \\
    \bottomrule
    \end{tabular}%
}
\vspace{-0.6cm}
\end{center}
\end{wraptable}

From a motivation and self-supervised learning perspective based on Isotropic Feature Distribution, MTMC is similar. Therefore, we chose representatives from two schools, CorInfoMax \cite{corinfomax} and VICReg \cite{bardes2022vicreg}, as challengers. (1) \textbf{CorInfoMax}, from the mutual \textit{information maximization} approach, aims to maximize mutual information between features and their target distribution, enhancing feature representations by promoting decorrelation and information retention. (2) \textbf{VICReg} is a representative of the \textit{variance-based regularization} approach, promoting feature variance, invariance to augmentations, and low covariance to ensure a diverse feature space.

As shown in Table \ref{tab:gcd3}, while there is a certain degree of benefit for accuracy, it is minimal. In the context of GCD, VICReg and CorInfoMax suffer from key limitations that impact their performance. VICReg, while promoting variance and reducing covariance, does not explicitly focus on maximizing intra-class representation completeness, which is crucial for distinguishing fine-grained categories. This lack of emphasis on manifold capacity leads to less expressive class boundaries. CorInfoMax, on the other hand, primarily maximizes mutual information but does not explicitly prevent dimensional collapse or ensure richer intra-class representations. As a result, both methods struggle to capture the full complexity of the data's structure, limiting their effectiveness in accurately discovering novel categories compared to MTMC, which directly optimizes representation completeness and manifold capacity. Overall, compared to the two optimization directions VICReg and CorInfoMax, MTMC provides a smoother and more uniform convergence curve of feature values, consistent with the analysis and theoretical framework proposed in this paper, as shown in Figure \ref{fig:ablation_svd_lind}.

\section{Conclusion}
\label{sec:con}


We introduces Maximum Token Manifold Capacity, a simple yet powerful approach for enhancing Generalized Category Discovery. By focusing on maximizing the manifold capacity of class tokens, MTMC prevents dimensional collapse, ensuring that intra-class representations are both complete and rich. This approach effectively addresses the limitations of traditional GCD methods, which often sacrifice representation quality for compact clustering. Our theoretical analysis and experiments show that MTMC significantly improves clustering accuracy, category number estimation, and inter-class separability, without introducing excessive computational complexity. Through extensive evaluations on both coarse- and fine-grained datasets, we demonstrate that MTMC enhances performance even on challenging benchmarks, making it a critical tool for open-world learning. By promoting comprehensive, non-collapsed representations, MTMC unlocks the model's full potential for more adaptable and robust machine learning models in real-world scenarios.

\small
\bibliographystyle{plain}
\bibliography{main}


\appendix

\clearpage

\section{Details of optimization objective of GCD}
\label{sec:gcd}

The existing GCD proposals are all proposed for compact clustering. Summarizing the optimization objectives of mainstream schemes GCD~\cite{vaze2022generalized}, CMS~\cite{choi2024contrastive} and SimGCD~\cite{wen2023parametric}, it can be observed that they are based on contrastive learning or prototype learning to significantly reduce the distance between potentially similar samples in the feature space.

\subsection{GCD}
The pioneering work~\cite{vaze2022generalized} divided the mini-batch $\mathcal{B}$ into labelled $\mathcal{B}^{l}$ and unlabeled $\mathcal{B}^{u}$, using supervised~\cite{khosla2020supervised} contrastive learning $\mathcal{L}_{\text{GCD}}^l = -\frac{1}{|\mathcal{B}^{l}|}\sum_{i\in \mathcal{B}^{l}}\frac{1}{|\mathcal{B}^{l}(i)|}\sum_{j\in\mathcal{B}^{l}(i)} \log\frac{\exp(\mathbf{z}_i^\top \mathbf{z}_j^\prime/\tau)}{\sum_{n\neq i} \exp(\mathbf{z}_i^\top \mathbf{z}_n^\prime/\tau)}$, and self-supervised~\cite{chen2020simple} contrastive learning $\mathcal{L}_{\text{GCD}}^u = -\frac{1}{|\mathcal{B}|}\sum_{i\in \mathcal{B}} \log\frac{\exp(\mathbf{z}_i^\top \mathbf{z}_i^\prime/\tau)}{\sum_{n\neq i} \exp(\mathbf{z}_i^\top \mathbf{z}_n^\prime/\tau)}$ and balancing them using coefficients $\lambda$: $ \mathcal{L}_{\text{GCD}} = (1-\lambda)\mathcal{L}_{\text{GCD}}^u + \lambda \mathcal{L}_{\text{GCD}}^l $, where $\mathcal{B}^{l}(i)$ represents the collection of samples with the same label as $i$. The $\mathbf{z}$ and $\mathbf{z}^\prime$ are augmented from two different views, and the $\tau$ is the temperature.

\subsection{CMS}
CMS~\cite{choi2024contrastive} and GCD adopt similar supervised and self-supervised contrastive learning. The difference is that CMS introduced mean-shift into unsupervised learning. For the $i$-th sample, CMS collects the feature set $\mathcal{V}=\left\{\mathbf{z}_i\right\}_{i=1}^N$ of training samples and calculates the k-nearest neighbours $\mathcal{N}\left(\mathbf{z}_i\right)=\left\{\mathbf{z}_i\right\} \cup \operatorname{argmax}_{\mathbf{z}_j \in \mathcal{V}}^k \mathbf{z}_i \cdot \mathbf{z}_j$, where $\operatorname{argmax}_{s \in \mathcal{S}}^k(\cdot)$ returns a subset of the top-$k$ items. By aggregating neighbor embeddings with weight kernel $\varphi(\cdot)$, it obtains the new embedded representation of samples after mean-shift: $\hat{\mathbf{z}_{i}}=\frac{\sum_{\mathbf{z}_j \in \mathcal{N}\left(\mathbf{z}_i\right)} \varphi\left(\mathbf{z}_j-\mathbf{z}_i\right) \mathbf{z}_j}{\left\|\sum_{\mathbf{z}_j \in \mathcal{N}\left(\mathbf{z}_i\right)} \varphi\left(\mathbf{z}_j-\mathbf{z}_i\right) \mathbf{z}_j\right\|}$. $\mathcal{L}_{\text{CMS}}$ and $\mathcal{L}_{\text{GCD}}$ are formally approximate.

\subsection{SimGCD} 
SimGCD~\cite{wen2023parametric} constructs a prototype classifier $\mathbf{C}=$ $\left\{\mathbf{c}_1, \cdots, \mathbf{c}_{K_{\text {known}}+K_{\text {novel}}}\right\}$ for both known and unknown classes. It obtains the posterior probability $\mathbf{p}_i^{(k)}=\frac{\exp \left(\mathbf{h}_i^{\top} \mathbf{c}_k\right) / \tau}{\sum_{k^{\prime}} \exp \left(\mathbf{h}_i^{\top} \mathbf{c}_k^{\prime}\right) / \tau}$ in a similar way to FixMatch and uses cross-entropy loss $\mathcal{L}_{\text{SimGCD}}^l=\frac{1}{\left| \mathcal{B}^l\right|} \sum_{i \in  \mathcal{B}^l} \ell\left(y_i, \mathbf{p}_i\right)$ on labeled samples. Self-distillation and entropy regularization $\mathcal{L}_{\text {SimGCD}}^u=\frac{1}{|\mathcal{B}|} \ell\left(\mathbf{p}_i^{\prime}, \mathbf{p}_i\right)-\lambda_e H(\frac{1}{2|\mathcal{B}|} \sum_{i \in \mathcal{B}}\left(\mathbf{p}_i+\mathbf{p}_i^{\prime}\right))$ are performed using augmented samples with probability $\mathbf{p}_i^{\prime}$.

\section{Proofs of Theorem}
\label{sec:theorem}

\begin{lemma}
\label{lem:entropy}
Given non-negative values \( p_i \) such that \( \sum_{i=1}^n p_i = 1 \), the entropy function \( H(p_1, \ldots, p_n) = -\sum_{i=1}^n p_i \log p_i \) is strictly concave. Furthermore, it is upper-bounded by \( \log n \), as demonstrated by the inequality,
\begin{equation}
\label{eq:property_entropy}
\log{n} = H(1/n,...,1/n) \ge H(p_{1},...,p_{n}) \ge 0.
\end{equation}
\end{lemma}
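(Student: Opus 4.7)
The plan is to establish the three assertions of the lemma in sequence: strict concavity of $H$, the non-negativity $H \geq 0$, and the upper bound $H \leq \log n$ with equality at the uniform distribution.

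First I would decompose $H$ coordinate-wise as $H(p_1, \ldots, p_n) = \sum_{i=1}^n h(p_i)$ with $h(x) = -x \log x$, adopting the standard convention $h(0) := 0$ that extends $h$ continuously to $[0,1]$. Computing $h''(x) = -1/x < 0$ on $(0, \infty)$ shows that $h$ is strictly concave, and since $H$ is a sum of strictly concave functions acting on distinct coordinates, $H$ is strictly concave on the simplex $\{p \in \mathbb{R}_{\geq 0}^n : \sum_i p_i = 1\}$. For the lower bound, each $p_i \in [0,1]$ gives $\log p_i \leq 0$ (with $0 \log 0 := 0$), so every summand $-p_i \log p_i$ is non-negative and hence $H \geq 0$.

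For the upper bound I would use Gibbs' inequality (which follows from Jensen's inequality applied to the strictly concave $\log$): for any two distributions $p$ and $q$, $-\sum_i p_i \log p_i \leq -\sum_i p_i \log q_i$ with equality iff $p = q$. Specializing $q_i = 1/n$ for all $i$ yields
\begin{equation}
H(p) \;\leq\; -\sum_{i=1}^n p_i \log(1/n) \;=\; \log n.
\end{equation}
Direct substitution confirms the identity $H(1/n, \ldots, 1/n) = -n \cdot (1/n) \log(1/n) = \log n$, so the bound is tight and, by the equality clause of Gibbs, attained exclusively at the uniform distribution. An equivalent route is to write $H(p) = \sum_{i : p_i > 0} p_i \log(1/p_i)$ and apply Jensen to obtain $H(p) \leq \log\bigl(\sum_{i : p_i > 0} p_i \cdot 1/p_i\bigr) \leq \log n$, with strict concavity of $\log$ again pinning down the equality case.

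The argument is essentially routine and I anticipate no serious obstacle; the only mild subtlety is the boundary convention $0 \log 0 := 0$, which is required so that the sum is well-defined when some coordinates vanish and so that the concave extension of $h$ to the closed interval $[0,1]$ behaves properly. This lemma then feeds directly into the preceding theorem, since applying the bound to the eigenvalue distribution $\{\lambda_j\}$ of $\mathcal{A}$, which is a probability vector of length $d$ supported on at most $\operatorname{rank}(\mathcal{A}) = k$ coordinates, yields $\hat{H}(\mathcal{A}) \leq \log k$ with equality exactly when the $k$ nonzero eigenvalues are uniform.
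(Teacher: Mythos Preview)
Your proof is correct and complete. The paper itself does not give a self-contained argument for this lemma at all; its entire proof reads ``Refer to Section D.1 in~\cite{marshall1979inequalities}.'' So your route---establishing strict concavity from $h''(x)=-1/x<0$, non-negativity termwise, and the upper bound via Gibbs'/Jensen's inequality with the uniform comparison distribution---is strictly more informative than what the paper provides. The only thing to note is that your writeup goes beyond what the paper needed: the paper treats this lemma as a black-box citation, whereas you supply the full standard proof, which is fine and arguably preferable for a self-contained exposition.
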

\begin{proof}
Refer to Section D.1 in~\cite{marshall1979inequalities}.
\end{proof}

\begin{lemma}
\label{lem:kld_gaussian}
The Kullback-Leibler (KL) divergence between two zero-mean, \( d \)-dimensional multivariate Gaussian distributions can be formulated as follows,
\begin{equation}
\begin{split}
&D_{\mathrm{KL}}( \mathcal{N} (0, \bm{\Sigma}_1) \Vert \mathcal{N} (0, \bm{\Sigma}_2) )\\
&= \frac{1}{2}\left[\text{tr}(\bm{\Sigma}_2^{-1}\bm{\Sigma}_1)-d +\log\frac{|\bm{\Sigma}_2|}{|\bm{\Sigma}_1|} \right].
\end{split}
\end{equation}
\end{lemma}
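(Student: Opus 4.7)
The plan is to evaluate the defining integral $D_{\mathrm{KL}}(p\Vert q) = \mathbb{E}_{p}[\log p(x) - \log q(x)]$ directly, reducing it to expectations of quadratic forms under a zero-mean Gaussian. Writing the densities explicitly as $p(x) = (2\pi)^{-d/2}|\bm{\Sigma}_1|^{-1/2}\exp(-\tfrac{1}{2}x^{\top}\bm{\Sigma}_1^{-1}x)$ and analogously for $q$, the log-density ratio collapses to a constant $\tfrac{1}{2}\log(|\bm{\Sigma}_2|/|\bm{\Sigma}_1|)$ plus the difference of two quadratic forms, so the remaining task is to compute $\mathbb{E}_{p}[x^{\top}Ax]$ for $A\in\{\bm{\Sigma}_1^{-1},\bm{\Sigma}_2^{-1}\}$.

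\textbf{Key steps in order.} First, I would write out the densities, cancel the $(2\pi)^{d/2}$ normalizers inside the log-ratio, and obtain
\begin{equation*}
\log\frac{p(x)}{q(x)} = \tfrac{1}{2}\log\frac{|\bm{\Sigma}_2|}{|\bm{\Sigma}_1|} - \tfrac{1}{2}x^{\top}\bm{\Sigma}_1^{-1}x + \tfrac{1}{2}x^{\top}\bm{\Sigma}_2^{-1}x.
\end{equation*}
Second, I would invoke the trace trick: since $x^{\top}Ax$ is a scalar, it equals $\operatorname{tr}(Axx^{\top})$, and by linearity of trace and expectation,
\begin{equation*}
\mathbb{E}_{p}[x^{\top}Ax] = \operatorname{tr}\bigl(A\,\mathbb{E}_{p}[xx^{\top}]\bigr) = \operatorname{tr}(A\bm{\Sigma}_1).
\end{equation*}
Third, I would specialize: for $A=\bm{\Sigma}_1^{-1}$ this yields $\operatorname{tr}(I_d)=d$, and for $A=\bm{\Sigma}_2^{-1}$ it yields $\operatorname{tr}(\bm{\Sigma}_2^{-1}\bm{\Sigma}_1)$. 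Finally, assembling the three contributions gives
\begin{equation*}
D_{\mathrm{KL}}\bigl(\mathcal{N}(0,\bm{\Sigma}_1)\,\Vert\,\mathcal{N}(0,\bm{\Sigma}_2)\bigr) = \tfrac{1}{2}\Bigl[\operatorname{tr}(\bm{\Sigma}_2^{-1}\bm{\Sigma}_1) - d + \log\tfrac{|\bm{\Sigma}_2|}{|\bm{\Sigma}_1|}\Bigr],
\end{equation*}
which is the claimed identity.

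\textbf{Anticipated obstacle.} This is a textbook computation and there is no substantive analytic difficulty; both covariances are implicitly assumed positive definite, so the inverses, determinants, and densities are well-defined. The only point requiring care is the trace identity: one must remember that $\mathbb{E}_{p}[xx^{\top}]=\bm{\Sigma}_1$ (not $\bm{\Sigma}_2$), since the expectation is taken under $p$, which is exactly why the cross term ends up as $\operatorname{tr}(\bm{\Sigma}_2^{-1}\bm{\Sigma}_1)$ rather than its symmetric counterpart. Beyond this standard bookkeeping, no further subtlety arises.
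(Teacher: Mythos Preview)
Your derivation is correct and is exactly the standard computation; the paper itself does not spell out a proof at all but simply cites Section~9 of Duchi's derivations notes, so you have in fact supplied more detail than the paper does. There is nothing to correct or compare.
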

\begin{proof}
Refer to Section 9 in~\cite{duchi2007derivations}.
\end{proof}

\begin{theorem}
For a given \texttt{[cls]} autocorrelation $\mathcal{A} =\mathbf{CLS}^{\top} \mathbf{CLS} / N  \in \mathbb{R}^{d \times d}$ of rank $k$ ($\leq d$),
\begin{equation}
    \log \left(\operatorname{rank}\left(\mathcal{A}\right)\right) \geq \hat{H}\left(\mathcal{A}\right)
\end{equation}
where equality holds if the eigenvalues of $\mathcal{A}$ are uniformly distributed with $\forall_{j=1}^k \lambda_j=1 / k$ and $\forall_{j=k+1}^d \lambda_j=0$.
\end{theorem}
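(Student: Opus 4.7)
The plan is to reduce the statement to a direct application of Lemma~\ref{lem:entropy} (the maximum-entropy bound for discrete distributions), after verifying that the nonzero eigenvalues of $\mathcal{A}$ form a valid probability vector of length equal to $\operatorname{rank}(\mathcal{A})$.

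First I would establish the preliminaries about $\mathcal{A}$. Because the $\texttt{[cls]}_i$ are L-2 normalized, we have $\texttt{[cls]}_i^\top \texttt{[cls]}_i = 1$, so $\operatorname{tr}(\mathcal{A}) = \tfrac{1}{N}\sum_i \operatorname{tr}(\texttt{[cls]}_i \texttt{[cls]}_i^\top) = 1$. Also $\mathcal{A} = \mathbf{CLS}^\top \mathbf{CLS}/N \succeq 0$, so all eigenvalues $\lambda_1,\dots,\lambda_d$ are nonnegative and sum to $1$. By the assumption $\operatorname{rank}(\mathcal{A}) = k$, exactly $k$ eigenvalues are strictly positive and the remaining $d-k$ vanish. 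Reorder so that $\lambda_1,\dots,\lambda_k > 0$ and $\lambda_{k+1} = \cdots = \lambda_d = 0$, and adopt the standard convention $0\log 0 = 0$ so that the zero eigenvalues contribute nothing to $\hat H(\mathcal{A})$.

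Next I would rewrite the von Neumann entropy in terms of only the nonzero spectrum, giving $\hat H(\mathcal{A}) = -\sum_{j=1}^{k} \lambda_j \log \lambda_j$ with $\lambda_j \geq 0$ and $\sum_{j=1}^{k}\lambda_j = 1$. This is exactly the Shannon entropy of a probability vector of length $k$. Applying Lemma~\ref{lem:entropy} with $n = k$ yields
\begin{equation}
\hat H(\mathcal{A}) = H(\lambda_1,\dots,\lambda_k) \;\leq\; \log k \;=\; \log(\operatorname{rank}(\mathcal{A})),
\end{equation}
which is the desired inequality. The equality characterization from Lemma~\ref{lem:entropy} transfers verbatim: equality holds iff the probability vector is uniform, i.e.\ $\lambda_j = 1/k$ for $j \leq k$ and $\lambda_j = 0$ for $j > k$, matching the statement.

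There is essentially no obstacle here; this is a short derivation, and the only subtle point is the treatment of the zero eigenvalues when passing from the raw sum $-\sum_{j=1}^d \lambda_j \log \lambda_j$ to the $k$-dimensional entropy. I would make this explicit via the convention $0 \log 0 := 0$ (justified by $\lim_{x\downarrow 0} x\log x = 0$), so that collapsing to the support of $\mathcal{A}$ is rigorous rather than formal. Lemma~\ref{lem:kld_gaussian} is not needed for this particular theorem and would only be invoked for related entropy-estimation arguments elsewhere in the appendix.
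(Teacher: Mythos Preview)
Your proposal is correct and follows essentially the same approach as the paper: both reduce the von Neumann entropy to the Shannon entropy of the $k$ nonzero eigenvalues (using the $0\log 0=0$ convention) and then invoke Lemma~\ref{lem:entropy} with $n=k$ to obtain the bound and equality condition. If anything, your write-up is slightly more careful than the paper's, since you explicitly verify $\operatorname{tr}(\mathcal{A})=1$ and $\mathcal{A}\succeq 0$ before applying the lemma, and you correctly note that Lemma~\ref{lem:kld_gaussian} plays no role here.
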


\begin{proof}
\begin{align}
\text{log}(\text{rank}(\mathcal{A})) &= \text{log}(k)\\[-2pt]
\label{eq:inequlity_rank}
&\ge H(\lambda_{1},...,\lambda_{k}) \text{ (by Lemma~\ref{lem:entropy})}\\[-2pt]
&=-\sum_{j=1}^k \lambda_{j} \log{\lambda_{j}}\\[-2pt]
\label{eq:equlity_rank}
&=-\sum_{j=1}^d \lambda_{j} \log{\lambda_{j}}\\[-2pt]
&=\hat{H}(\mathcal{A}).
\end{align}
According to Lemma~\ref{lem:entropy}, the inequality~\eqref{eq:inequlity_rank} attains equality if and only if \( \lambda_j = \frac{1}{k} \) for all \( j = 1, 2, \ldots, k \). Equation~\eqref{eq:equlity_rank} adheres to the convention that \( 0 \log 0 = 0 \), as per the definition in~\cite{thomas2006elements}.
\end{proof}

More details about the definition of  $\lambda$=1. Suppose we have a set of $n$ normalized vectors $\mathbf{v}_1, \mathbf{v}_2, \ldots, \mathbf{v}_n$, where the second-order norm (or length) of each vector is 1, that is, $\|\mathbf{v}_i\| = 1$ for all $i$. The autocorrelation matrix $\mathbf{A}$ of these vectors is defined as:
\begin{equation}
\mathbf{A} = \frac{1}{n} \sum_{i = 1}^n \mathbf{v}_i \mathbf{v}_i^T
\end{equation}

Here, $\mathbf{v}_i \mathbf{v}_i^T$ is the outer product of the vector $\mathbf{v}_i$ with itself, which is a rank-1 matrix. The autocorrelation matrix $\mathbf{A}$ is the average of these outer product matrices.

Next, we need to find the eigenvalues of $\mathbf{A}$. Since $\mathbf{v}_i$ is normalized, $\mathbf{v}_i^T \mathbf{v}_i = 1$. This means that each $\mathbf{v}_i$ is an eigenvector of $\mathbf{A}$ with the corresponding eigenvalue of $\frac{1}{n}$. This is because:

\begin{equation}
\mathbf{A} \mathbf{v}_i = \frac{1}{n} \left( \sum_{j = 1}^n \mathbf{v}_j \mathbf{v}_j^T \right) \mathbf{v}_i = \frac{1}{n} \sum_{j = 1}^n \mathbf{v}_j (\mathbf{v}_j^T \mathbf{v}_i) = \frac{1}{n} \mathbf{v}_i (\mathbf{v}_i^T \mathbf{v}_i) = \frac{1}{n} \mathbf{v}_i \cdot 1 = \frac{1}{n} \mathbf{v}_i
\end{equation}

So, the eigenvalue corresponding to each $\mathbf{v}_i$ is $\frac{1}{n}$. Since $\mathbf{A}$ is a rank-$n$ matrix (assuming the vectors $\mathbf{v}_1, \mathbf{v}_2, \ldots, \mathbf{v}_n$ are linearly independent), it has $n$ eigenvalues. We already know that $n$ of the $n$ eigenvalues are $\frac{1}{n}$. Therefore, the sum of all the eigenvalues of $\mathbf{A}$ is:

\begin{equation}
\text{sum of eigenvalues} = \frac{1}{n} + \frac{1}{n} + \cdots + \frac{1}{n} = n \cdot \frac{1}{n} = 1
\end{equation}

\section{Theoretically Necessary of GCD with MTMC}
\label{sec:theoremMTMC}

GCD is a semi-supervised learning scheme and MTMC is theoretically necessary for GCD. We conduct a comprehensive analysis and derivation from High-Dimensional Probability perspectives (with special consideration given to the more general cases where the number of points $P$ is large and the dimension $D$ is high).

GCD aims to cluster the embeddings of samples from the same category as closely as possible, regardless of whether they are from known or unknown classes. That is each cluster center lies on the hypersphere, and the distribution of the centers on the hypersphere is made as uniform as possible. More formally, this goal can be replaced by two definitions in previous studies~\cite{galvez2023role, wang2020understanding}.

$\textbf{Definition 1}$ ($\textit{Perfect Reconstruction}$). A network $f_{\theta}$ is Perfect Reconstruction if $\forall \mathbf{x} \in \mathcal{X}, \forall t^{(1)}, t^{(2)} \in \mathcal{T}, \mathbf{z}^{(1)}=f_\theta\left(t^{(1)}(\mathbf{x})\right)=f_\theta\left(t^{(2)}(\mathbf{x})\right)=\mathbf{z}^{(2)}$, where $\mathcal{T}$ is a set of data augmentations such as color jittering, cropping, flipping, etc. The dataset is $\mathbf{x}_{1:P}$ with $P$ samples, and the set after applying $1:K$ augmentation methods is $t^{(1)}\left(\mathbf{x}_p\right), \ldots, t^{(K)}\left(\mathbf{x}_p\right)$.

$\textbf{Definition 2}$ ($\textit{Perfect Uniformity}$). $p(Z)$ is the distribution over the network representations induced by the data and transformation sampling distributions. If $p(Z)$ is a uniform distribution on the hypersphere, then the network $f_{\theta}$ achieves Perfect Uniformity.

Intuitively, perfect reconstruction means that the network maps all views of the same data to the same embedding, while perfect uniformity means that these embeddings are uniformly distributed on the hypersphere. For brevity, we denote the centroid embedding of the class token representing the $p$-th sample under different augmentations as $\mathbf{z}_p$. We prove the following: A network that simultaneously achieves perfect reconstruction and perfect uniformity achieves a lower bound of what MTMC has, that is, it provides the lowest probability of $\mathcal{L}_{MTMC}$.

$\textbf{Proposition 1}$. $\textit{Suppose that,}$ $\forall p \in[P], \mathbf{c}_p^T \mathbf{c}_p \leq 1$. $\textit{Then,}$ $0 \leq\|C\|_* \leq \sqrt{P \min (P, D)}$.

$\textit{Proof}$. Let $\sigma_1, \ldots, \sigma_{\min (P, D)}$ denote the singular values of $C$, so that $\|C\|_*=\sum_{i=1}^{\min (P, D)} \sigma_i$. The lower bound follows by the fact that singular values are nonnegative. For the upper bound, we have

\begin{equation}
\sum_{i=1}^{\min (P, D)} \sigma_i^2=\operatorname{Tr}\left[C C^T\right]=\sum_{n=1}^P \mathbf{c}_p^T \mathbf{c}_p \leq P
\end{equation}

Then, by Cauchy-Schwarz on the sequences $(1, \ldots, 1)$ and $\left(\sigma_1, \ldots, \sigma_{\min (P, D)}\right)$, we get

\begin{equation}
\sum_{i=1}^{\min (P, D)} \sigma_i \leq \sqrt{\left(\sum_{i=1}^{\min (P, D)} 1\right)\left(\sum_{i=1}^{\min (P, D)} \sigma_i^2\right)} \leq \sqrt{\min (P, D) P} .
\end{equation}

$\textbf{Proposition 2}$. Let $f_\theta$ achieve perfect reconstruction. Then, $\left\|\mathbf{c}_p\right\|_2=1 \forall n$.

$\textit{Proof}$. Because $f_\theta$ achieves perfect reconstruction, $\forall n, \forall t^{(1)}, t^{(2)}, \mathbf{z}_p^{(1)}=\mathbf{z}_p^{(2)}$. Thus $\mathbf{c}_p=$ $(1 / K) \sum_k \mathbf{z}_p^{(k)}=(1 / K) \sum_k \mathbf{z}_p^{(1)}=\mathbf{z}_p^{(1)}$, and since $\left\|\mathbf{z}_p^{(1)}\right\|_2=1$, we have $\left\|\mathbf{c}_p\right\|_2=1$.

$\textbf{Theorem 1}$. Let $f_\theta: \mathcal{X} \rightarrow \mathbb{S}^D$ be a network that achieves perfect reconstruction and perfect uniformity. Then $f_\theta$ achieves the lower bound of $\mathcal{L}_{MTMC}$ with high probability. Specifically:

\begin{equation}
\|C\|_*= \begin{cases}P(1-O(P / D)) & \text { if } P \leq D \\\ \sqrt{P D}(1-O(D / P)) & \text { if } P \geq D\end{cases}
\end{equation}

with high probability in $\min (P, D)$.

This demonstrates that the MTMC loss can be minimized by minimizing the distances of all embeddings corresponding to the same datum and maximizing the distances of all samples' centers.

The above derivations and analyses based on High-Dimensional Probability demonstrate, the $\textbf{theoretical strong correlation}$ of MTMC and GCD (as a type of semi-supervised learning).

\section{More Analysis}
\label{sec:app_analysis}

\subsection{Impact of embedding quality}

In Table~\ref{tab:gcd}, the accuracy gains on the CIFAR100 and Herbarium19 datasets are insignificant. We use this as a starting point to analyze the conflict between enhancing feature completeness and low embedding quality in GCD. DINO, through self-supervision, already has a good feature representation capability, but due to the distribution of data, its embedding quality still be low. One source of low quality is the data size, and the other is data semantics. 

(1) Specifically, when the small-sized CIFAR10 images are interpolated and input into ViT, the high-frequency information is lost. For example, when identifying animal categories, the low-frequency features such as the outline of the animal may be captured relatively well, but the detailed features such as the texture and eyes of the animal (high-frequency features) are difficult to accurately extract. In this case, the model can only cluster through some shortcut information, rather than accurately clustering based on the complete intra-class features.  Since the manifold dimension of the low-frequency features is relatively low, it is unable to fully capture the diversity and complexity within the class. Therefore, enhancing the completeness of the intra-class representation on small-sized data is challenging.

(2) Herbarium19 is a large-scale herbal plant recognition dataset, which is not in the model's training data and inherently cannot provide highly discriminative representations. Additionally, the large number of categories makes the decision boundary more chaotic, and existing GCD schemes cannot cluster well. Therefore, enhancing the completeness of intra-class representation on overly low-quality embeddings is not feasible, as the overlap of feature spaces across categories is too large, and samples within a cluster come from multiple categories.

\begin{figure}[tbp]
  \centering
  \vspace{-0.4cm}
  \subfloat[SimGCD]
  {\includegraphics[width=0.23\textwidth]{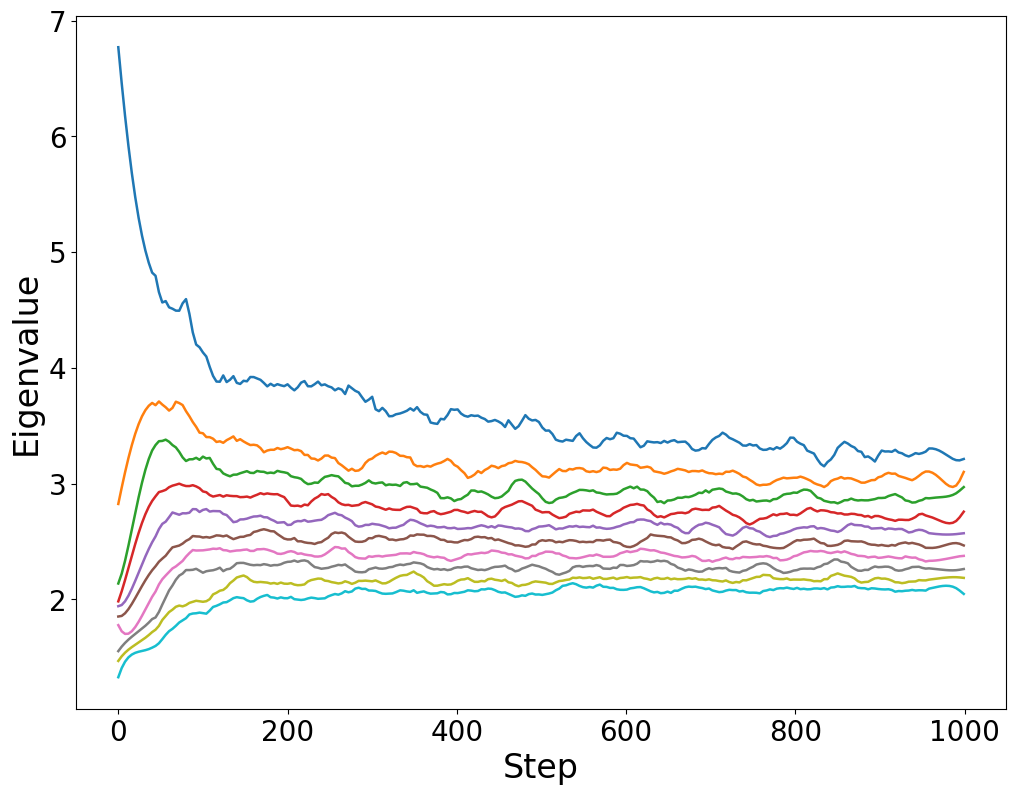}}
  \quad     
  \subfloat[with CorInfoMax]
  {\includegraphics[width=0.23\textwidth]{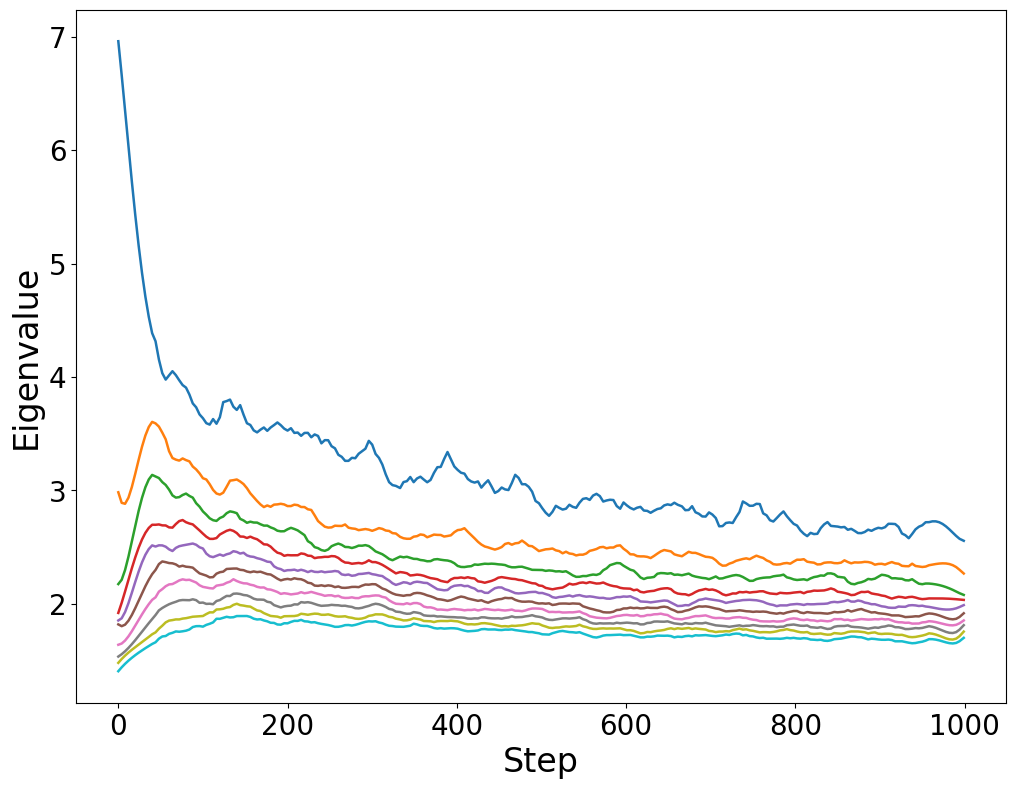}}
  \quad
  \subfloat[with VICReg]
  {\includegraphics[width=0.23\textwidth]{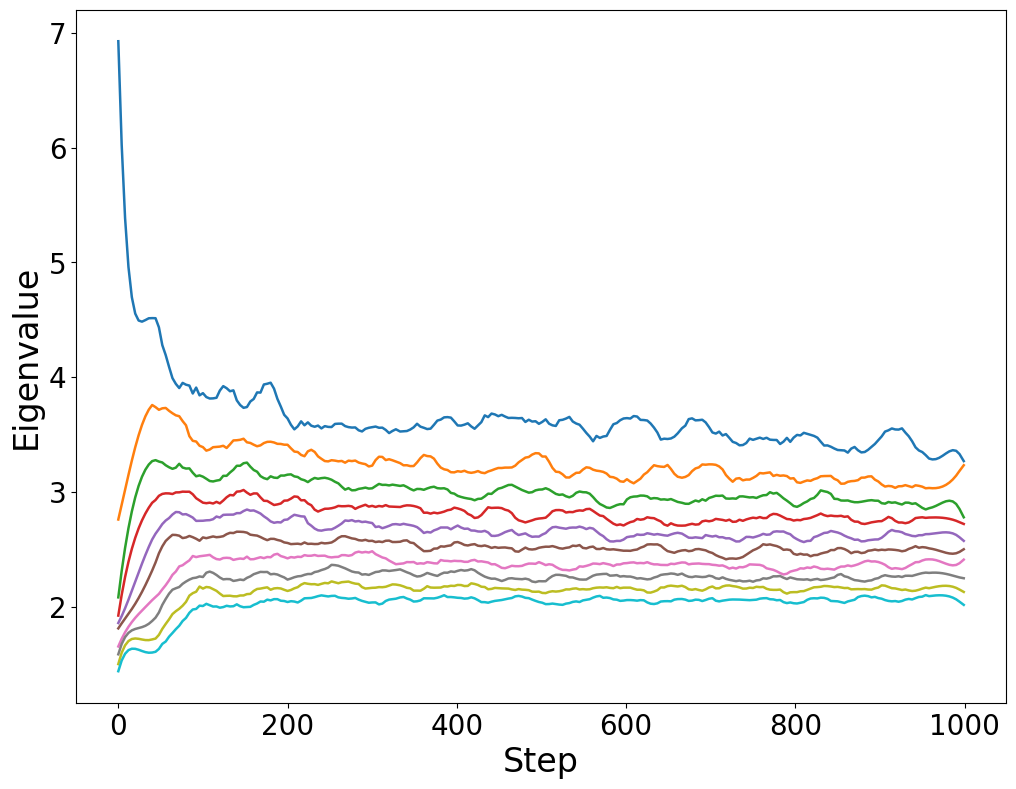}}
  \quad
  \subfloat[with Ours]
  {\includegraphics[width=0.23\textwidth]{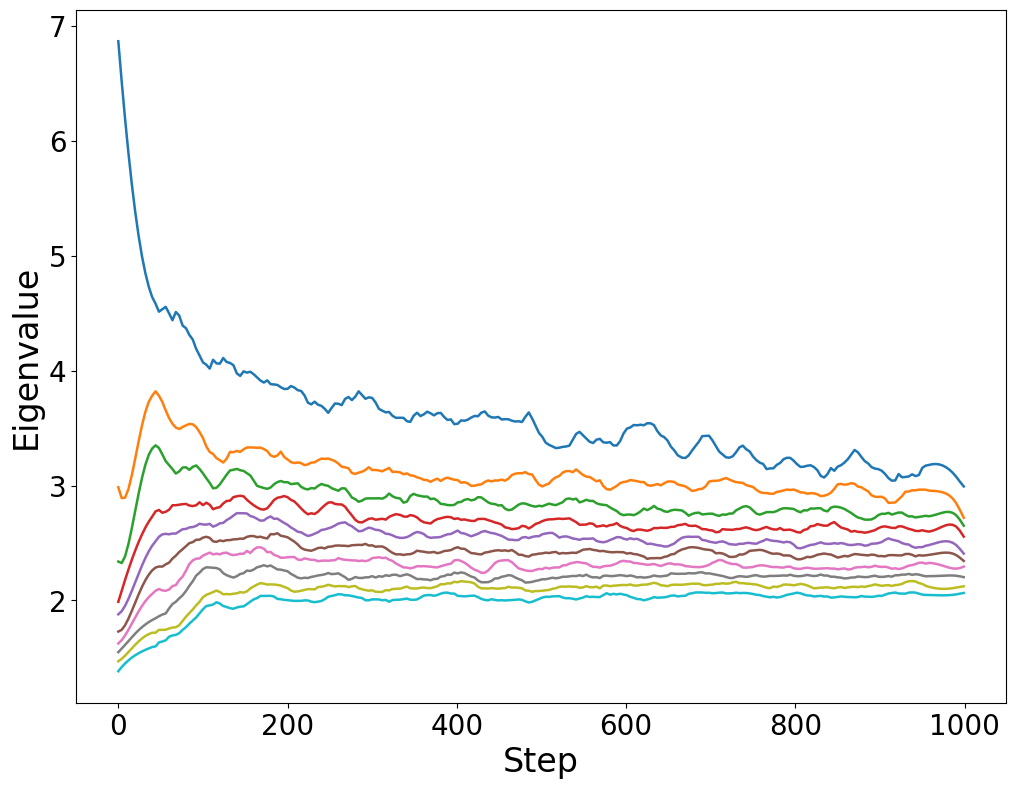}}
  \quad
  \caption{Trends in the top 10 singular values as the number of training steps grows.}
  \label{fig:ablation_svd_lind}
\end{figure}

\subsection{Analysis of VICReg and CorInfoMax}

Compared to the two optimization directions, VICReg and CorInfoMax, MTMC offers a smoother and more uniform convergence of feature values, addressing some key limitations in both methods (Figure \ref{fig:ablation_svd_lind}). VICReg, as a variance-based regularization approach, promotes feature variance and decorrelation but lacks explicit emphasis on intra-class representation completeness. This results in less expressive class boundaries and less effective fine-grained category separation. CorInfoMax, on the other hand, focuses on maximizing mutual information between features and their target distribution but does not sufficiently prevent dimensional collapse or guarantee richer intra-class representations. Both methods, while effective in some contexts, fail to fully capture the complex, high-dimensional structure of the data.

In contrast, MTMC directly targets the manifold capacity of class tokens, ensuring that intra-class representations remain complete and informative. By maximizing the nuclear norm of the class token's singular values, MTMC ensures that feature values converge uniformly, without the collapse seen in other methods. This leads to more robust and accurate clustering, particularly when discovering novel categories. The smooth convergence of MTMC reflects its ability to optimize representation quality while maintaining high inter-class separability, which is critical for open-world learning tasks.

\section{Related Works}
\label{sec:rela}

\subsection{Generalized Category Discovery}
\label{subsec:rela_gcd}

Generalized category discovery~\cite{vaze2022generalized,zhao2023learning,wen2023parametric,choi2024contrastive} is crucial for identifying and classifying both known and new categories in a dataset, expanding beyond traditional supervised learning to recognize new classes not seen during training.
The pioneering work~\cite{vaze2022generalized} establishes a framework that employs semi-supervised k-means clustering. Following this initial proposition, SimGCD~\cite{wen2023parametric} is introduced as a parametric classification approach that utilizes entropy regularization and self-distillation.
Expanding on these concepts, CMS~\cite{choi2024contrastive} is proposed, enhancing representation learning through mean-shift based clustering. Moreover, a deep clustering approach~\cite{zhao2023learning} emerges that dynamically adjusts the number of prototypes during inference, facilitating an adaptive discovery of new categories. Most recently, ActiveGCD~\cite{ma2024active} actively selects samples from unlabeled data to query for labels, with the aim of enhancing the discovery of new categories through an adaptive sampling strategy. Happy~\cite{ma2024happy} explores Continual Generalized Category Discovery (C-GCD), addressing the conflict between discovering new classes and preventing forgetting of old ones through hardness-aware prototype sampling and soft entropy regularization. Each of these contributions addresses the multifaceted challenges of representation learning, category number estimation, and label assignment, redefining the frontiers of open-world learning. Regardless of the flourishing development of GCD, their focus remains on compact clustering, neglecting the integrity of intra-class representation. Our goal is to empower any GCD scheme with concise means to promote the non-collapse representation of each sample, thus shaping more accurate decision boundaries.

\subsection{Dimensional Collapse}
\label{subsec:rela_collapse}

 This Dimensional collapse~\cite{grill2020bootstrap,caron2020unsupervised,shi2023understanding,jing2021understanding} occurs when the learned embeddings tend to concentrate within a lower-dimensional subspace rather than dispersing throughout the entire embedding space, thereby limiting the representations' capacity for diversity and expressiveness. DirectCLR~\cite{jing2021understanding} presents a direct optimization of the representation space, sidestepping the need for a trainable projector, which inherently mitigates the risk of dimensional collapse by promoting a more even distribution of embeddings across the space. Complementing this, the whitening approach~\cite{tao2024breaking} standardizes covariance matrices through whitening techniques, ensuring that each dimension contributes equally to the representation, thus preventing any subset of dimensions from dominating the learning process. Similarly, the non-contrastive learning objective~\cite{chen2024towards} for collaborative filtering avoids data augmentation and negative sampling, focusing on alignment and compactness within the embedding space to prevent dimensional collapse. The Bregman matrix divergence~\cite{zhang2024geometric} further fortifies the fight against dimensional collapse by minimizing the distance between covariance matrices and the identity matrix, ensuring a uniform distribution of embeddings and directly countering the concentration of information along certain dimensions. Moreover, random orthogonal projection image modeling~\cite{haghighat2023pre} provides a preventative measure against dimensional collapse by modeling images with random orthogonal projections, which promotes the exploration of a wide range of features and discourages the concentration on a limited subset of dimensions. Rather than directly addressing the issue of dimensional collapse, we focus on maximizing token manifold capacity to align the radius and dimensions of the manifold with the rich distribution of the real world. This approach also unravels the sample-level dimensional collapse.

\end{document}